\documentclass{article}

\oddsidemargin .25in \evensidemargin .25in
\marginparwidth 0.07 true in
\topmargin -0.5in
\addtolength{\headsep}{0.25in}
\textheight 8.5 true in 
\textwidth 6.0 true in
\widowpenalty=10000
\clubpenalty=10000

\usepackage{graphicx}
\usepackage{booktabs}
\usepackage{caption}
\usepackage{subcaption}
\usepackage{showlabels}
\usepackage{enumitem}
\usepackage[dvipsnames]{xcolor}

\usepackage{floatrow}

\usepackage{hyperref}

\usepackage{graphbox, amsmath, array, amsthm, amssymb}

\usepackage{amsmath,amsfonts,bm}

\newtheorem{theorem}{Theorem}
\newtheorem{lemma}{Lemma}

\newtheorem{corollary}{Corollary}
\newtheorem{remark}{Remark}

\newcommand{\R}{\mathbb{R}}

\newcommand{\E}{\mathbb{E}}

\renewcommand{\a}{\alpha}

\newcommand{\e}{\varepsilon}
\newcommand{\calO}{\mathcal{O}}
\newcommand{\bfx}{\mathbf{x}}
\newcommand{\Flin}{{F^\text{lin}}}
\newcommand{\tr}{\mathrm{tr}}

\newcommand{\relu}{\mathrm{ReLU}}

\newcommand{\ttil}[1]{\Sigma_{#1}^{(a)}}
\newcommand{\calQ}{\mathcal{Q}^{(a)}}
\newcommand{\citep}{\cite}

\newcommand{\eq}[1]{\begin{equation} #1 \end{equation}}
\newcommand{\al}[1]{\begin{align} #1 \end{align}}
\newcommand{\als}[1]{\begin{align*} #1 \end{align*}}

\newcommand{\pa}[1]{\left( #1 \right)}
\newcommand{\qa}[1]{\left[ #1 \right]}
\newcommand{\absa}[1]{\left| #1 \right|}

\newcommand{\lem}[1]{\begin{lemma} #1 \end{lemma}}
\newcommand{\pf}[1]

\usepackage[utf8]{inputenc} 
\usepackage[T1]{fontenc}    
\usepackage{hyperref}       
\usepackage{url}            
\usepackage{booktabs}       
\usepackage{amsfonts}       
\usepackage{nicefrac}       
\usepackage{microtype}      
\usepackage{xcolor}         
\usepackage{hyperref}

\title{A Random Matrix Perspective on Mixtures of\\Nonlinearities in High Dimensions}

\author{%
\setlength\tabcolsep{5mm}
\hspace{-9mm}
\begin{tabular}{c c c}
    \textbf{Ben Adlam}\thanks{Equal contribution.}\; \thanks{Work done as a member of the Google AI Residency program (g.co/brainresidency).} & \textbf{Jake Levinson}\footnotemark[1]\; \footnotemark[2] & \textbf{Jeffrey Pennington}\footnotemark[1] \\
    Google Research & Google Research & Google Research \\
    \texttt{adlam@google.com} & \texttt{jlev@google.com} &\texttt{jpennin@google.com}
\end{tabular}
}

\begin{document}

\maketitle

\begin{abstract}
One of the distinguishing characteristics of modern deep learning systems is that they typically employ neural network architectures that utilize enormous numbers of parameters, often in the millions and sometimes even in the billions. While this paradigm has inspired significant research on the properties of large networks, relatively little work has been devoted to the fact that these networks are often used to model large complex datasets, which may themselves contain millions or even billions of constraints. In this work, we focus on this high-dimensional regime in which both the dataset size and the number of features tend to infinity. We analyze the performance of random feature regression with features $F=f(WX+B)$ for a random weight matrix $W$ and random bias vector $B$, obtaining exact formulae for the asymptotic training and test errors for data generated by a linear teacher model. The role of the bias can be understood as parameterizing a distribution over activation functions, and our analysis directly generalizes to such distributions, even those not expressible with a traditional additive bias. Intriguingly, we find that a mixture of nonlinearities can improve both the training and test errors over the best single nonlinearity, suggesting that mixtures of nonlinearities might be useful for approximate kernel methods or neural network architecture design.
\end{abstract}

\section{Introduction}
It is undeniable that in recent years deep learning systems have found widespread success in their applications to a diverse and ever-expanding set of domains. The foundational results on many tasks such as image recognition~\citep{krizhevsky2012}, speech recognition~\citep{hinton2012}, and machine translation~\citep{wu2016}, have begun to make their way into higher-level products that people interact with and rely upon in their daily lives. Whether these products generate a medical diagnosis, a navigation decision, or some other important output, it is crucial to understand the inner-workings of the algorithms that generate them. 

Unfortunately, our theoretical understanding of these deep learning algorithms continues to lag behind their impressive practical successes. One main challenge in building a fuller understanding stems from the fact that deep neural networks are complex nonlinear functions that employ millions or even billions~\citep{shazeer2017} of parameters. Traditional wisdom would suggest that to this parameter complexity corresponds to an optimization difficulty. Recent work, however, suggests that as the width of a network's hidden layers becomes large, the loss function simplifies and a theoretical analysis becomes tractable~\citep{Jacot:2018:NTK:3327757.3327948, mei2018mean, chizat2018global, mei2019mean, rotskoff2019global, rotskoff2018neural}. In some scenarios, the simplification is such that throughout training the parameters of the model stay within an infinitesimal radius of their initial (random) values, implying that much about neural network training can be understood by studying the random initialization \citep{Jacot:2018:NTK:3327757.3327948, chizat2018note, lee2019wide}.

Another main challenge in building a rich understanding of deep learning systems stems from the fact that they are often trained on very large, complex datasets: even if the models themselves are very large, they may not be large in comparison to the number of constraints they are designed to satisfy. Indeed, many important phenomena may become apparent only by examining the high-dimensional regime in which the dataset size and width are both large and of the same order.

In this work, we focus on the high-dimensional regime and analyze the performance of a regression model trained on the random features $F=f(WX+B)$ for a random weight matrix $W$ and random bias vector $B$. We obtain an exact formula for the training error on a noisy autoencoding task and for the test error from fitting data labeled by a linear model in the limit that the width and dataset size both go to infinity. These results are determined by the resolvent of the kernel matrix $F^\top F$, whose properties we analyze via the \emph{resolvent method} from random matrix theory. Our analysis also provides an exact formula for the eigenvalue density of the kernel matrix, which may be of independent interest since it provides a characterization for how spectral properties of the data covariance matrix propagate through neural network layers at initialization.

\subsection{Our contributions}

The main contribution of our work is an exact characterization of the training error of a ridge-regularized random feature regression model on a noisy autoencoder task and the test error when the data are labeled by a linear-teacher model in the high-dimensional regime. This is one of the first non-trivial models to be solved exactly in the joint limit of large data and large width and provides an interesting testing ground in which to analyze this regime. Some additional contributions include,
\begin{itemize}
\item An exact characterization of the spectral density of the random feature matrix ${F=f(WX+B)}$, extending prior results of \citep{pennington2017nonlinear} to non-Gaussian data distributions and to non-zero bias distributions.
\item One interpretation of the random additive bias is that it induces a distribution of activation functions parameterized by B, \emph{i.e.} ${f(Z; B):=f(Z+B)}$. Our analysis trivially extends to \emph{any} distribution of activation functions $f(\cdot\;;B)$ parameterized by $B$.
\item We show that there exists a non-trivial distribution over activation functions that outperforms the best possible single activation function in terms of memorization capacity and test error when fitting data labeled by a linear-teacher model.
\item Our method of proof introduces a surrogate ``linearization" of $F$, $F^{\text{lin}}$, that possesses the same spectral information as $F$. $F^{\text{lin}}$ and its properties are likely to be of further interest and utility in analyzing neural networks in high dimensions.
\end{itemize}

\subsection{Related Work}
Neural networks have been studied from the perspective of high-dimensional statistics in a number of recent works. Most prior work has focused on the bias-free case. The paper \cite{pennington2017nonlinear} studied the spectrum of the activation matrix $f(WX)$ for iid Gaussian data and derived an analytic expression for the training error of a ridge-regularized random feature model trained on pure noise. It is natural to consider incorporating biases by appending a constant feature 1 to $X$. Unfortunately, this leads to biases that are the same order as the weights, and so the effect disappears in the large dataset limit. Moreover, this modification on the data violates the assumptions of \cite{pennington2017nonlinear}. The paper \citep{hastie2019surprises} studies ridgeless interpolation in high-dimensional for linear features as well as nonlinear random features of iid Gaussian data. In \citep{louart2018random}, a deterministic equivalent for the resolvent of the kernel matrix $F^\top F$ is derived, which allowed for a characterization of the asymptotic training and test performance of linear ridge regression of random feature models.

Other work has investigated learning dynamics and generalization in the high-dimensional regime~\citep{liao2018dynamics, lampinen2018analytic, advani2017high} as well as the spectra of more complicated objects such as the Hessian~\citep{pennington2017geometry} and Fisher information matrix~\citep{pennington2018spectrum}. From the mathematical perspective, random matrix theory provides natural tools~\citep{Silverstein95onthe} for analyzing the behavior of neural networks in the high-dimensional regime. The paper \citep{liao:hal-01954933} examined spectra for data drawn from Gaussian mixture models; see also \citep{elkaroui} on the spectra of random kernel matrices.

Since an initial preprint of this paper was made available online, several papers have studied the generalization properties of random feature regression (all without a bias vector) in the high-dimensional limit \citep{mei2019generalization,adlam2020neural,adlam2020understanding,d2020double}. Indeed the equivalence of the spectra of the nonlinear random feature matrix $F$ to a "linearized" version of $F$ has become a crucial step in analyzing the high-dimensional asymptotics of random feature methods.

\section{Preliminaries}
\label{sec_pre}
Consider a dataset $X \in \mathbb{R}^{n_0 \times m}$ and the random feature matrix,
\[F = f(WX; B),\]
generated by a single hidden-layer network with iid Gaussian weights $W \in \mathbb{R}^{n_1 \times n_0}$ (${W_{ak} \sim \mathcal{N}(0, \sigma_W^2/n_0)}$), activation function $f$, and biases $B = b{{\bf 1}_m}^\top  \in \mathbb{R}^{n_1\times m}$ (for $b\in \mathbb{R}^{n_1}$). We regard the second argument of $f$ as parametrizing (continuously or discretely) an ensemble of activation functions. We refer to $B$ (or $b$) as the \emph{bias}, in reference to the important special case $f(WX+B)$ (additive bias). In general, however, we only assume the parameters $b_a$ are drawn iid from some distribution $\mu_B$, effecting a distribution over activation functions.

We assume that $f(\cdot ;b)$ is differential almost everywhere and $\E \absa{f(N;b)}^k$ for $N\sim{\mathcal N}(0, \sigma)$ is finite for all $1\leq k\leq 4$, $\sigma>0$, and $b\in\text{support}(\mu_B)$. When $\mu_B$ is a single Dirac mass at location $b_0$, the activation function can be written as $f(WX;B) = f(WX;b_0) = g(WX)$ for some single-argument function $g$ (for an additive bias, $g(WX) = f(WX + b_0)$). When this is the case, we say the model has a \emph{single activation function}, as opposed to a mixture or distribution of activation functions.

The quantities of interest for our investigation is the kernel matrix, $\tfrac{1}{n_1}F^\top  F$, and its \emph{resolvent},
\begin{equation}
    G(z) = \pa{\tfrac{1}{n_1} F^\top F - z I}^{-1}\,.
\end{equation}
As we review in Sec.~\ref{sec:ridge}, the optimal regression coefficients of a linear model on the random features $F$ is a simple function of this resolvent.

The high-dimensional regime that we study is the one in which the dataset size $m$, feature dimensionality $n_0$, and hidden layer width $n_1$ all go to infinity at the same rate. In particular, as is standard in the random matrix literature, we assume that we can parameterize the limit in terms of the dataset size $m$ in such a way that there exist two positive constants,
\begin{equation}
\phi := \lim_{m\to\infty} \frac{n_0(m)}{m} \quad \text{and} \quad \psi := \lim_{m\to\infty} \frac{n_0(m)}{n_1(m)}\,.
\end{equation}
Note that the resolvent is a random matrix, but as $m$ grows large, its normalized trace becomes a deterministic quantity. In the limit that $m\to\infty$, this quantity is known as the \emph{Stieltjes transform},
\eq{ \label{eq: stieltjes}
s(z) := \lim_{m \to \infty} \tfrac{1}{m} \tr\, G(z)\, .
}
Together with an auxiliary transform $\tilde{s}(z)$, defined below, these deterministic quantities completely characterize the asymptotic training error of kernel ridge regression on a noisy autoencoder task in this high-dimensional regime.

The Stieltjes transform frequently arises in random matrix methods as a way to encode the spectra of matrices. In particular, if $\lambda_i$ are the eigenvalues of $\tfrac{1}{n_1}F^\top  F$ and the empirical distribution of eigenvalues converges in distribution to some deterministic limiting density as $m\to\infty$,
\begin{equation}
\frac{1}{m} \sum_{i=1}^m \delta_{\lambda_i} \to \mu(\lambda)d\lambda\,, 
\end{equation}
then (with appropriate technical assumptions), the limiting spectral density itself can be recovered from the Stieltjes transform $s(z)$ via the inversion formula,
\eq{ \label{eq: stieltjes inversion}
    \mu(\lambda) = \lim_{\epsilon \to 0+} \frac{s(\lambda - i \epsilon) - s(\lambda + i \epsilon)}{2 \pi i \epsilon}\,.
}
The Stieltjes transform then substitutes convergence in distribution for pointwise convergence for all $z$ such that $\Im z>0$.

\subsection{Methods for computing the Stieltjes transform}
We briefly review two standard methods for computing the Stieltjes transform $s(z)$, the \emph{resolvent method} and the \emph{moments method}.

The resolvent method is an approach for computing the Stieltjes transform based on the application of the Schur complement formula to the resolvent itself (or to a closely-related block matrix). Intuitively, as the matrix size becomes large, the minors of the matrix are similar in distribution to the larger matrix, and, moreover, the Cauchy interlacing theorem guarantees that their Stieltjes transforms are close as well. This allows for the derivation of a self-consistent equation (SCE) in which the Stieltjes transform appears on the left-hand side as the trace of the resolvent, and on the right-hand side as the trace of one of its minors.

The moments method is more combinatorial in nature and involves expanding the resolvent for large $z$ and computing the traces of each term,
\eq{
s(z) = \lim_{m \to \infty} \tfrac{1}{m} \tr\, G(z) = -\lim_{m \to \infty} \sum_{k=0}^{\infty} \frac{1}{n_1^k} \frac{\tr (F^\top F)^k}{z^{k+1}}\,.
}
The traces themselves are expended out as
\eq{
\tr (F^\top F)^k = \sum F_{a_1 \alpha_1} F_{a_1 \alpha_2} \cdots F_{a_k \alpha_1}\,,
}
where the sum runs over matrix indices $a_1, \ldots, a_k, \alpha_1, \ldots \alpha_k$. The essence of the moment method involves analyzing the asymptotic contribution of each term in the sum based on its combinatorial type and the details of $F$, and resumming the results to obtain $s(z)$. 

We refer the reader to~\citep{erdos2017dynamical, TaoBook} for more details about these methods and additional background on random matrix theory.

\begin{figure}
    \centering
    \includegraphics[width=0.4\linewidth]{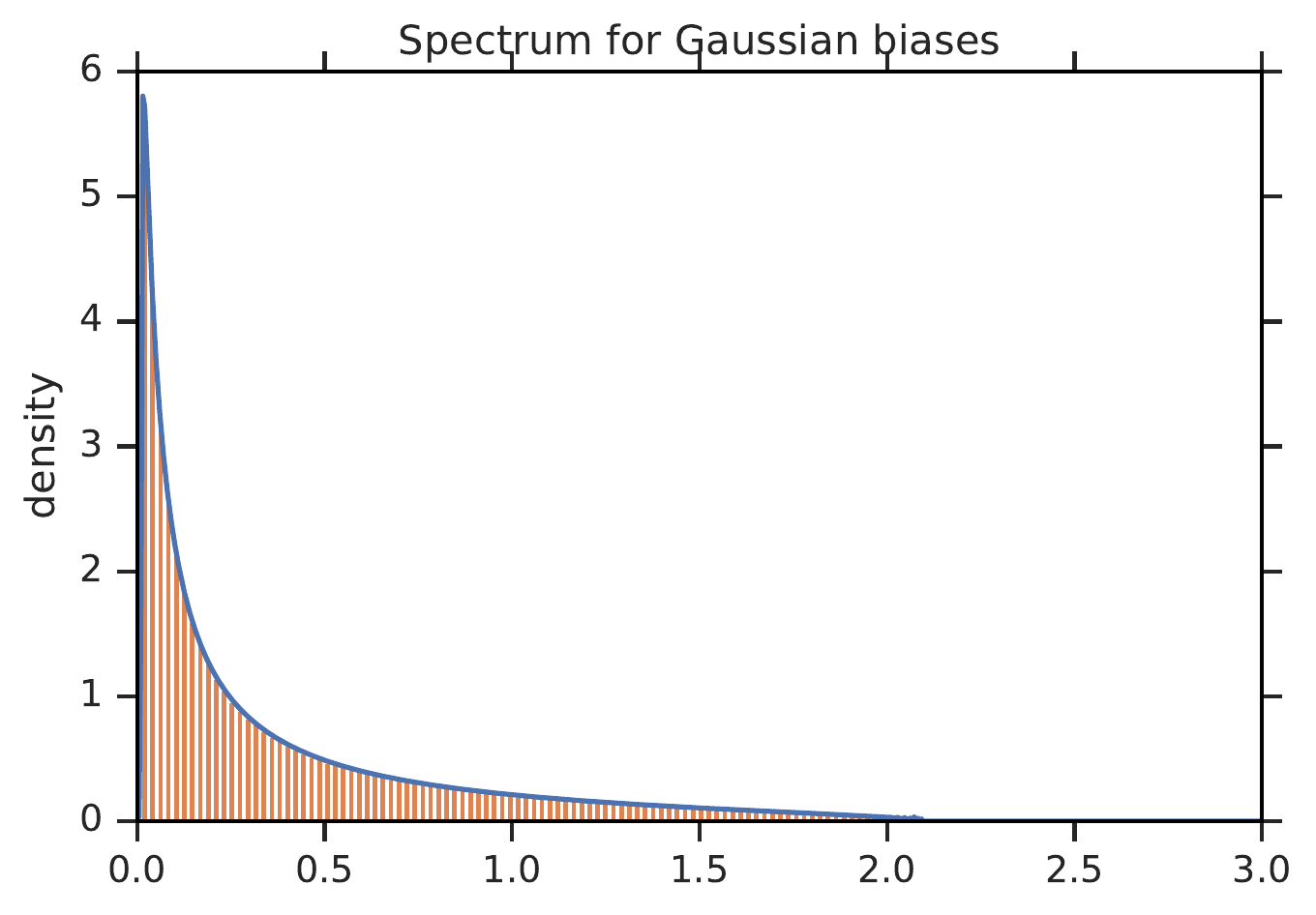}
    \quad\includegraphics[width=0.4\linewidth]{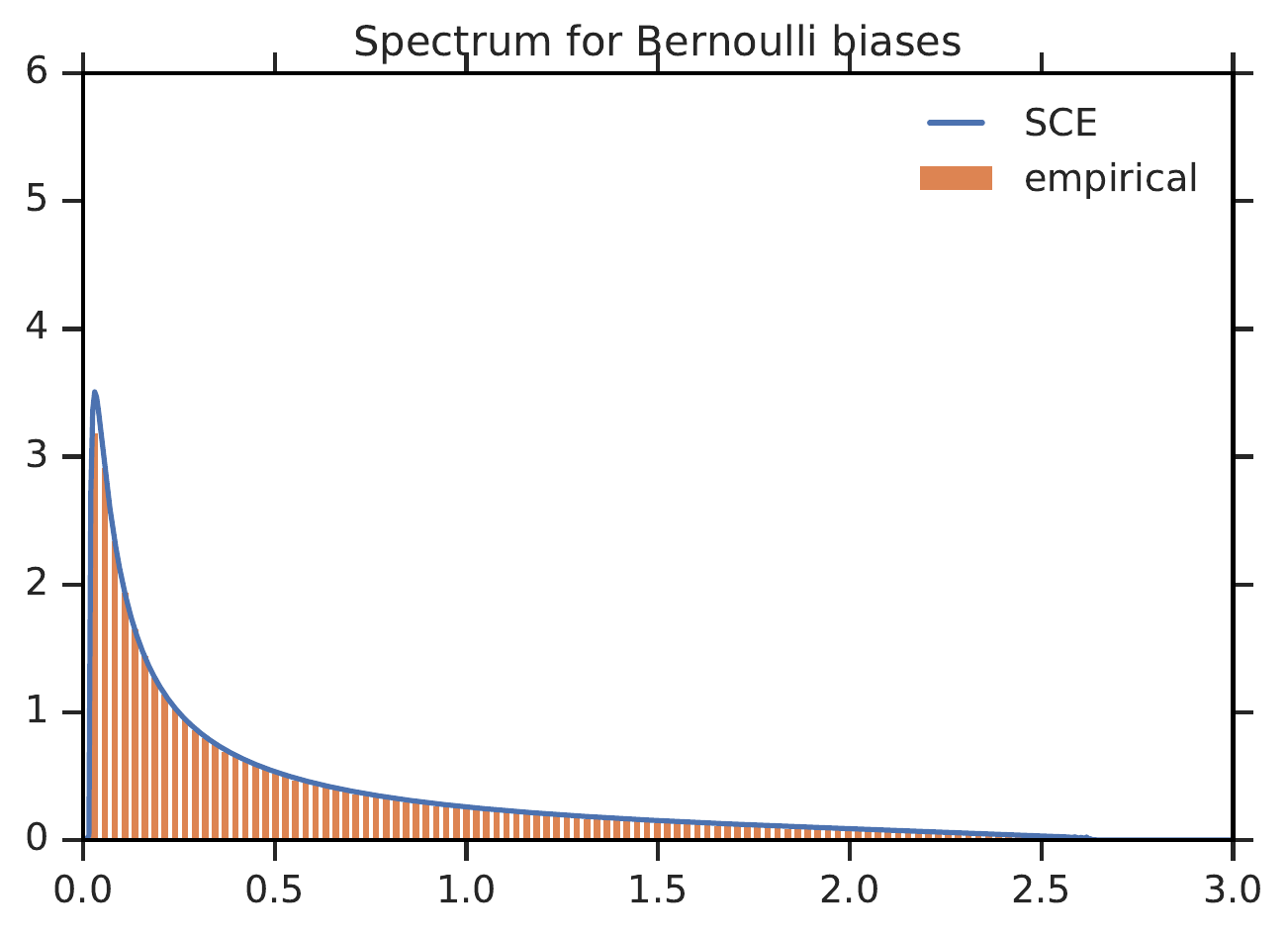}
    \caption{We get excellent agreement of theory and simulation for spectral densities for any bias distribution. We set $\phi=1.5$, $\psi=0.8$, $\sigma_X=\sigma_W=1$, and $f=\relu$. Simulations are performed on matrices of size $m = 2^{14}$.  \textbf{Left} Gaussian distribution over the biases with distribution $\mathcal{N}(0,1)$. \textbf{Right} Bernoulli distribution over biases with distribution $p=0.5$.}
    \label{fig:bias_effect_spectra}
    \vspace{-0.2cm}
\end{figure}

\section{Result for Stieltjes transform}
\subsection{Main theorem}
\label{subsec_main}
We make the following assumptions on the data matrix $X$ and bias vector $b$:
\begin{enumerate}
    \item  $\absa{\frac{1}{n_0}\sum_a X_{a\alpha}X_{a\beta}-\delta_{\alpha\beta}\sigma_X^2} \leq Cn_0^{\epsilon-1/2}$ for some positive constants $C>0$ and $\epsilon<1/100$ uniformly in $\alpha$ and $\beta$ for all $n_0>N$ for some constant $N$;
    \item  the empirical eigenvalue distribution of $\frac{1}{n_0}X^\top X$ converges in distribution to a measure $\mu_X$;
    \item $B_i\sim \mu_B$ iid, so that $\frac{1}{n_1}\sum_{a=1}^{n_1}\delta_{b_a} \to \mu_B$ in distribution.
\end{enumerate}
\begin{theorem}\label{thm: sce}
Define $\sigma_Z = \sigma_W \sigma_X$ and  resolvent $G(z) = \pa{\tfrac{1}{n_1} F^\top F - z I}^{-1}$. Then under the above assumptions and for all $z$ such that $\Im z>0$, the transforms
\begin{equation}
   \tfrac{1}{m} \tr\, G(z)\quad\text{and}\quad \tfrac{1}{m} \tr \pa{ \tfrac{1}{n_0} X^\top  X  G(z)}\,,
\end{equation}
converge in probability to the unique solution, $s(z)$ and $\tilde{s}(z)$, of the Eq.~\eqref{eq: main sce m} that map $\mathbb{C}^+$ to $\mathbb{C}^+$:
\al{\label{eq: main sce m}
    s(z) &= \E_{S \sim \mu_X}\qa{
        \frac{1}{C_0(z) + S C_1(z)}
    }\\\nonumber
    \tilde{s}(z) &= \E_{S \sim \mu_X}\qa{
        \frac{S}{C_0(z) + S C_1(z)}
    }
}
where,
\al{
\label{eq: main sce aux}
    C_0(z) &:= -z + \E_{B\sim \mu_B} \qa{ \frac{\eta(B) - \zeta(B)}{D(B)} },\\\nonumber
   C_1(z) &:= \E_{B\sim \mu_B} \qa{ \frac{\zeta(B)}{D(B)}}\,,\\\nonumber
 D(b) &:= 1+\frac{\psi}{\phi} \pa{ \zeta(b)\tilde{s}(z) +\pa{\eta(b)-\zeta(b)}s(z) } \,,
}
and where $\eta(b)$ and $\zeta(b)$ are the Gaussian expectations
\al{
    \label{eqn:eta_zeta}
    \eta(b) &:= \E_{N \sim \mathcal{N}(0, \sigma_Z^2)}\qa{f(N; b)^2} \\\nonumber
    \zeta(b) &:= \pa{\E_{N \sim \mathcal{N}(0, \sigma_Z^2)}\qa{N f(N; b) / \sigma_Z}}^2.
}
\end{theorem}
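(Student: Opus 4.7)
The plan is to prove the theorem via a Gaussian equivalence argument: replace the nonlinear matrix $F$ by a tractable linear surrogate $F^{\mathrm{lin}}$ whose spectrum agrees with that of $F$ to leading order, then compute the Stieltjes transform of $F^{\mathrm{lin}}$ by the resolvent method. To construct the surrogate, use the data concentration $\tfrac{1}{n_0} X^\top X \approx \sigma_X^2 I$: for each row $a$ the scalar $(WX)_{a\alpha}$ is approximately $\mathcal{N}(0, \sigma_Z^2)$, so $f(\cdot; b_a)$ admits a Hermite decomposition whose linear coefficient has squared magnitude $\zeta(b_a)$ and whose total squared norm is $\eta(b_a)$ (up to a constant term, which only rank-one-perturbs the spectrum and is therefore safe to drop for bulk purposes). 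Keeping the linear Hermite term and replacing the nonlinear Hermite part by an independent Gaussian of matching variance yields
\[
F^{\mathrm{lin}}_{a\alpha} := \frac{\sqrt{\zeta(b_a)}}{\sigma_Z} (WX)_{a\alpha} + \sqrt{\eta(b_a) - \zeta(b_a)}\, \Theta_{a\alpha},
\]
where $\Theta \in \R^{n_1 \times m}$ has iid standard Gaussian entries independent of $W, X, B$.

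Next I would prove that $\tfrac{1}{n_1} F^\top F$ and $\tfrac{1}{n_1} (F^{\mathrm{lin}})^\top F^{\mathrm{lin}}$ have the same limiting Stieltjes transform by the moment method. Expanding $\E \tr (F^\top F)^k$ over matrix indices $(a_i, \alpha_i)$ and applying Gaussian integration by parts on $W$ conditional on $B, X$, each term reduces to a product of factors determined by the combinatorial type of the index pattern: ``linear'' contractions contribute $\zeta(b_a)$, ``diagonal'' self-contractions contribute $\eta(b_a) - \zeta(b_a)$, and higher-order Hermite contractions are suppressed by at least one factor of $n^{-1}$. Summing the surviving diagrams against $\mu_B$ (over the iid $b_a$) and against $\mu_X$ (through traces of polynomials in $X^\top X / n_0$) matches the moment expansion of $F^{\mathrm{lin}}$ term-by-term, establishing spectral equivalence.

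Finally I would compute the Stieltjes transform of $F^{\mathrm{lin}} = AWX + D\Theta$, where $A = \mathrm{diag}(\sqrt{\zeta(b_a)}/\sigma_Z)$ and $D = \mathrm{diag}(\sqrt{\eta(b_a) - \zeta(b_a)})$. Embedding the resolvent of $\tfrac{1}{n_1} (F^{\mathrm{lin}})^\top F^{\mathrm{lin}}$ into a block linearization that exposes $W$, $X$, and $\Theta$ as separate off-diagonal blocks, Schur-complement recursion produces coupled fixed-point equations for the normalized traces. Both $\mu_X$ and $\mu_B$ expectations emerge from this reduction: $(F^{\mathrm{lin}})^\top F^{\mathrm{lin}}$ contains a term $X^\top A^2 X$, which forces simultaneous tracking of $s(z) = \tfrac{1}{m}\tr G(z)$ and the $X^\top X/n_0$-weighted trace $\tilde s(z)$, while summing the row-indexed blocks over the iid $b_a$ averages the per-row coefficients against $\mu_B$ and produces exactly the quantities $C_0(z), C_1(z), D(b)$ appearing in \eqref{eq: main sce aux}. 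Uniqueness of the $\mb{C}^+ \to \mb{C}^+$ solution follows from a standard Stieltjes-transform contraction argument applied to the resulting system.

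The main obstacle I expect is the moment-matching in the equivalence step. For a single activation the bookkeeping is standard (as in Pennington--Worah), but for a mixture the row-dependent coefficients $\zeta(b_a), \eta(b_a)$ must be carried through every index contraction, and non-leading Hermite modes of $f(\cdot; b)$ must be shown to be subleading \emph{uniformly} in $b \in \mathrm{support}(\mu_B)$. The assumption $\E |f(N; b)|^k < \infty$ for $1 \le k \le 4$ together with the concentration bound on $\tfrac{1}{n_0} X^\top X$ are exactly the moment and regularity inputs needed to make this uniform control work and to confirm that no non-leading diagram is promoted to leading order by summing over the many rows with varying $b_a$.
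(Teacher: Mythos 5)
Your three-step architecture (build a Gaussian surrogate $F^{\mathrm{lin}}$ with matched second moments, show spectral equivalence with $F$, then run a Schur-complement recursion on $F^{\mathrm{lin}}$) is exactly the architecture of the paper's proof, and your step one (Hermite decomposition conditional on $\tfrac{1}{n_0}X^\top X\approx\sigma_X^2 I$, with the constant Hermite mode dropped as a rank-one perturbation) and your step three (block linearization exposing $W$, $X$, $\Theta$ and tracking both $s$ and the $X^\top X/n_0$-weighted trace $\tilde s$) coincide in substance with the paper's Lemma~\ref{lem: cor structure} and Section~\ref{sec: sce derivation}. The divergence is in step two. You propose to prove the equivalence of the spectra of $F$ and $F^{\mathrm{lin}}$ by the moment method, expanding $\E\tr(F^\top F)^k$ and matching diagram-by-diagram as in Pennington--Worah. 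The paper instead invokes a concentration-of-quadratic-forms universality theorem: it verifies assumptions (A3)--(A4) of Yaskov (a generalization of Corollary~1.1 of Bai--Zhou to non-identically-distributed rows), which only require controlling the mean, variance, and fourth-order cross-covariances of the quadratic forms $\sum_{\a\beta}F_{a\a}F_{a\beta}A_{\a\beta}$, rather than all mixed moments at every order.

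This choice is not merely stylistic, and you should be aware of two difficulties your route would run into that the paper's route avoids. First, the moment method as sketched in Pennington--Worah integrates over iid Gaussian data $X$; here the theorem allows a general $X$ with only a concentration bound on $\tfrac{1}{n_0}X^\top X$, so you must work conditionally on $X$, and then the ``combinatorial type'' of a diagram no longer determines its magnitude -- you must propagate products of the off-diagonal correlations $c_{\a\b}=O(n_0^{\e-1/2})$ through arbitrarily long cycles of indices and show they resum correctly, which is substantially harder than the finitely many fourth-order cases the paper checks. Second, you must also work conditionally on $B$: if $B$ is integrated out, the rows of $F$ become iid (which would let you cite Bai--Zhou directly), but the resulting $t_{\a\b}=\E_B\E_W[F_{a\a}F_{a\b}\mid B]$ loses the per-row structure and the concentration of the quadratic forms actually fails, as the paper notes; conditioning on $B$ makes the rows non-identically distributed, which is exactly why the paper needs Yaskov's extension rather than Bai--Zhou. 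A moment-method treatment of this non-identically-distributed, conditional-on-$X$ ensemble is plausible but would require you to show uniformly in $b\in\mathrm{support}(\mu_B)$ and uniformly in the allowed deviations of $c_{\a\b}$ that no diagram is promoted to leading order -- precisely the control the universality theorem packages for you after checking only a finite family of fourth-moment conditions. Your closing remarks show you see this hazard; the paper's choice of Yaskov's result is the cleaner way to discharge it.
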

Sometimes $\zeta(b)$ is written using the (potentially weak) derivative of $f$. Using Stein's lemma or integration by parts, we see $\E\qa{N f(N; b) / \sigma_Z} = \E\qa{\sigma_Z f'(N; b)}$.

\begin{figure*}
    \centering
    \begin{tabular}{cc}
    \includegraphics[width=0.35\linewidth]{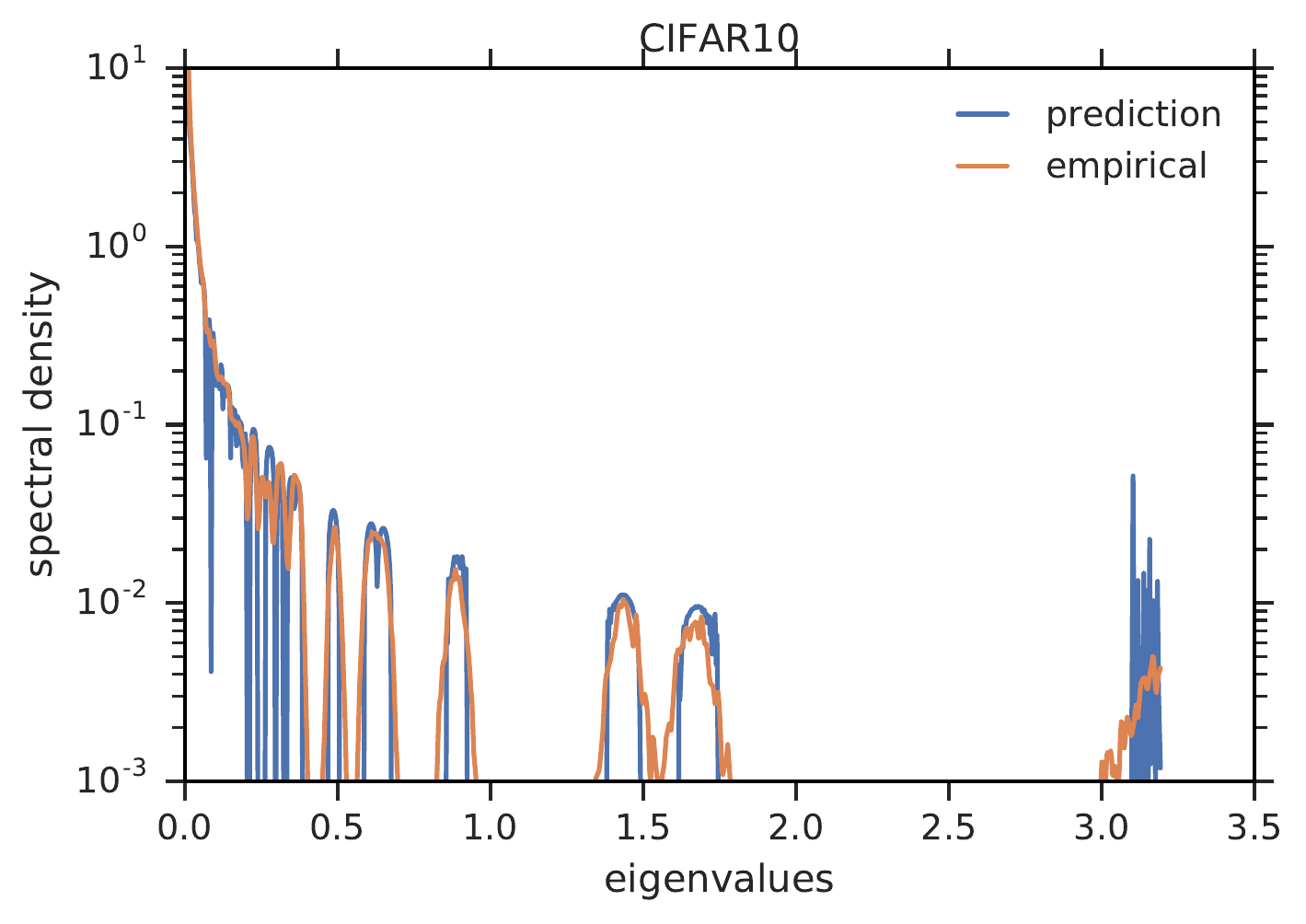} &
    \includegraphics[width=0.35\linewidth]{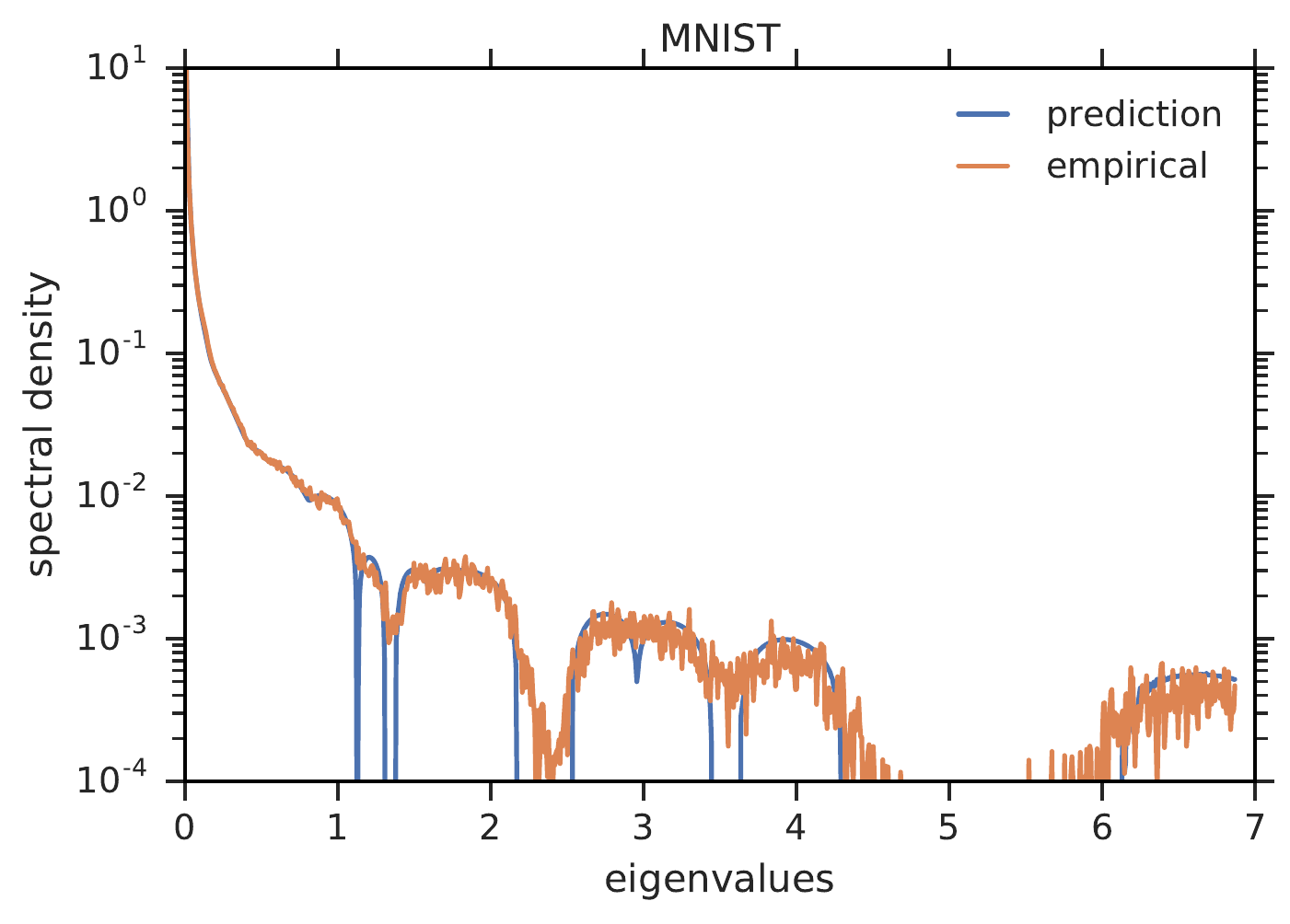} \\
    \includegraphics[width=0.34\linewidth]{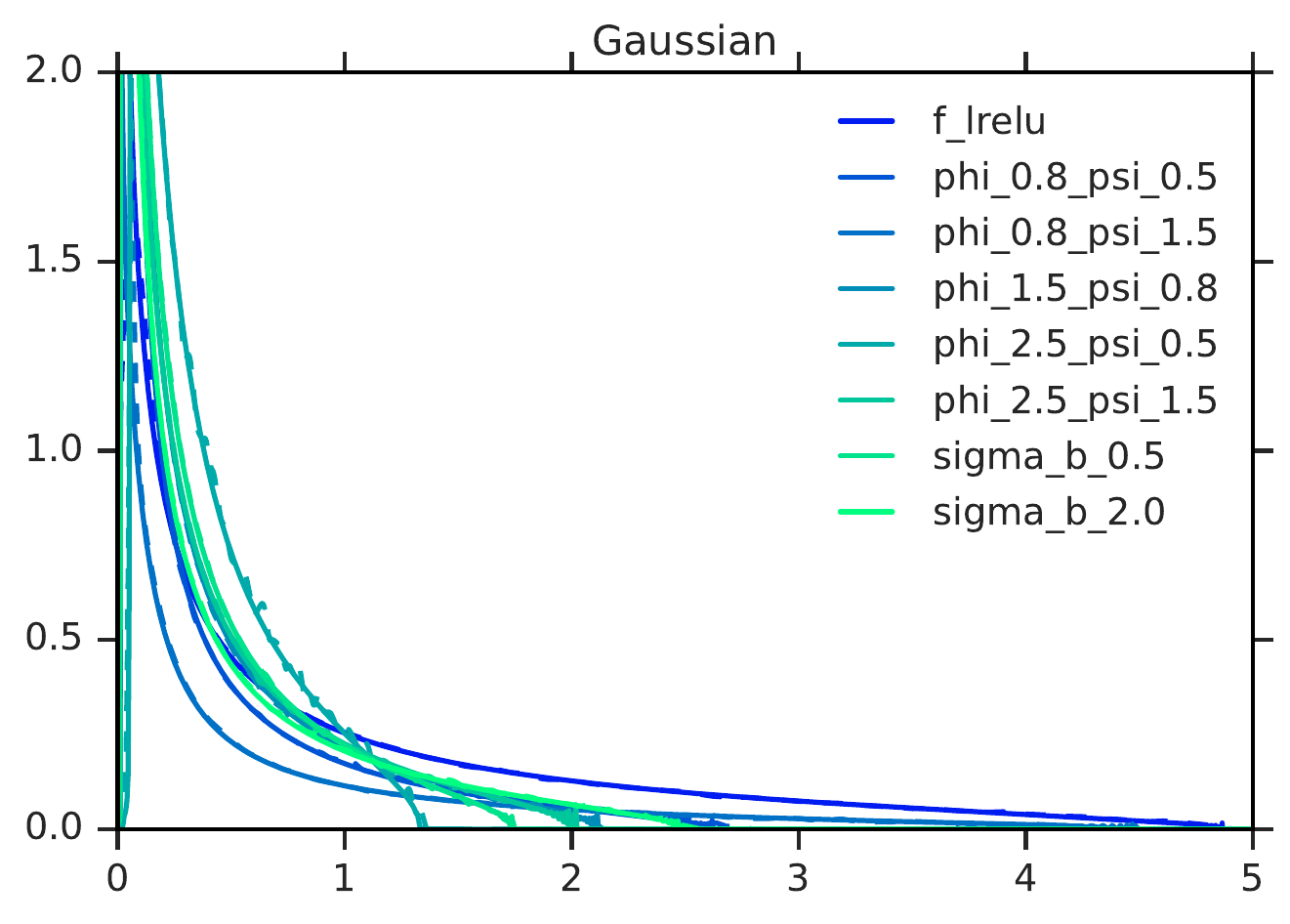} &
    \includegraphics[width=0.34\linewidth]{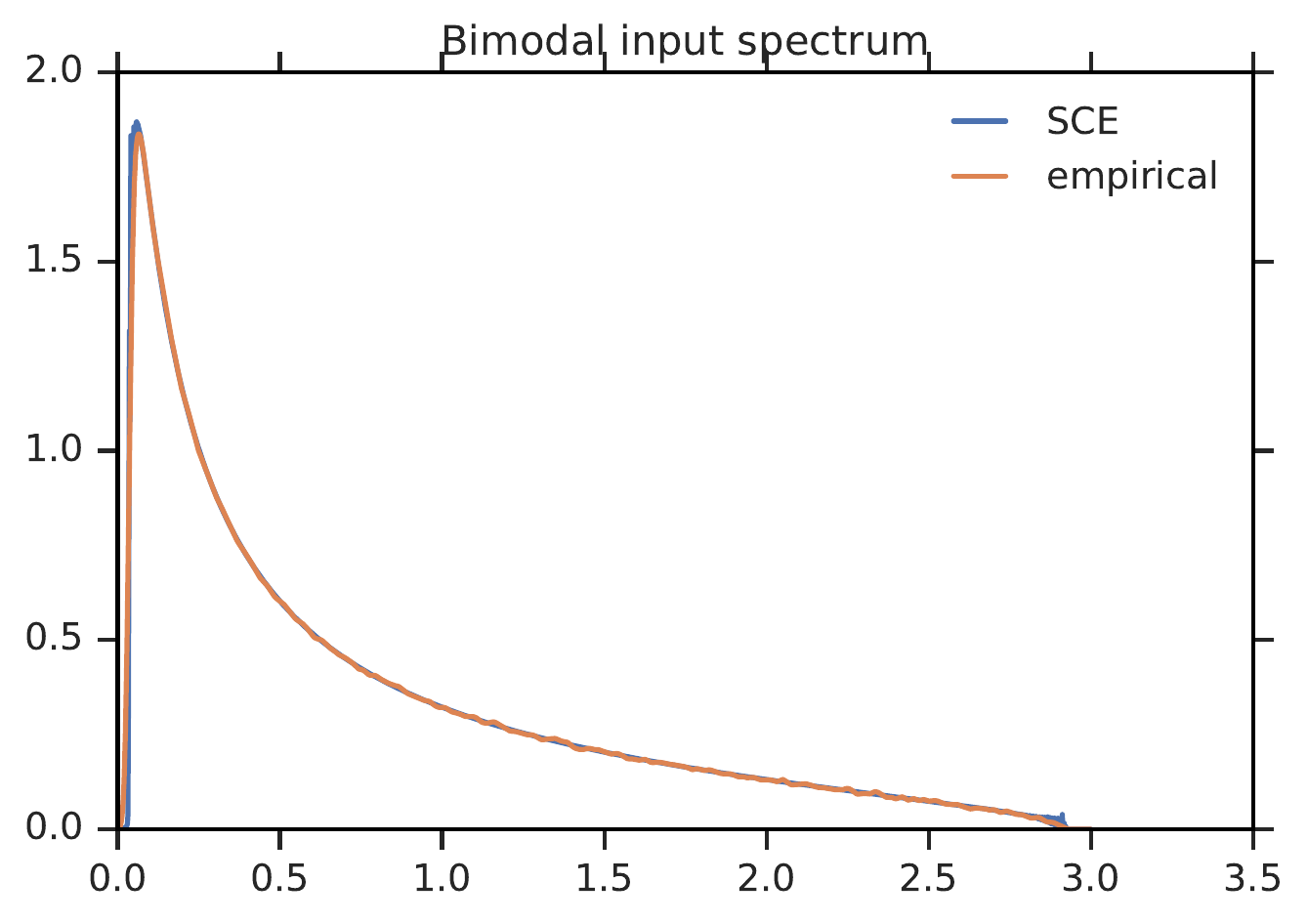}
    \end{tabular}
    \caption{Empirical spectral densities agree with our predictions for varied data distributions and shape parameters. {\bf Top left}: One class from CIFAR (airplane), mean subtracted. {\bf Top right}: Classes $\{0, 8\}$ from MNIST, mean subtracted. {\bf Bottom left}: Gaussian input data, varying the NN parameter settings and activation function. {\bf Bottom right}: Input data with a bimodal spectrum. All plots used $f=\relu, \phi=1.5, \psi=0.8$ and $\sigma_W = \sigma_B = 1$, except for the indicated modified parameter. Empirical densities were smoothed using a Gaussian KDE.}
    \label{fig:spectra}
\end{figure*}

The proof is quite involved and is presented in the SM. The basic idea is to derive a multivariate Gaussian random matrix model with the same correlation structure as $F$, then derive a self-consistent equation (SCE) using the resolvent method for this \emph{linearized} version of $F$, $F^{\text{lin}}$.

\begin{remark}
The self-consistent equations consist of two \emph{coupled} equations involving the Stieltjes transform $s(z)$ and an auxiliary object $\tilde{s}(z)$, (cf. \citep[Eq. (2)]{paul2009no}), which we will see in Cor.~\ref{cor: ridge regression} essentially measures the autoencoding capacity of the network.
\end{remark}
\begin{remark}
Note that the self-consistent equations contain an expectation over the limiting spectral density of the input data. While the assumptions on the data matrix $X$ in Thm.~\ref{thm: sce} are quite general, they may not be optimal. See Sec.~\ref{sec:experiments}, where we show strong agreement with empirical data from MNIST and CIFAR-10 and for a range of synthetic distributions. This suggests that the theorem may hold for even more general data distributions.
\end{remark}

\subsection{Alternate representation and limiting results}
When the data distribution is iid Gaussian, the expectations in Eq.~\eqref{eq: main sce m} can be expressed in closed form, though one must be careful to choose the correct branch of the resulting function. For simplicity and future reference, we focus on the setting where $0 < \phi \le \psi \le 1$, in which case we have the coupled algebraic equations,
\al{
    \label{eq:m_reduced}
    &s(z) = \Big(C_1 - (C_0 + C_1)\phi + \sqrt{C_1^2 +2(C_0 - C_1)C_1 \phi + (C_0 + C_1)^2\phi^2}\Big) / \pa{2C_0 C_1},\\
    \label{eq:mtilde_reduced}
    &\tilde{s}(z) = \frac{1 - C_0 s(z)}{C_1}\,.
}
When $\mu_B$ in Eq.~\eqref{eq: main sce aux} is trivial, \emph{i.e.} a single Dirac mass, then the result should reduce to the single activation function case with $F=f(WX)$, which was studied in~\citep{pennington2017nonlinear}. Indeed, writing $\eta = \mathbb{E}_B[\eta(B)]$ and $\zeta = \mathbb{E}_B[\zeta(b)]$ for such a distribution, and eliminating $\tilde{s}(z)$ from Eqs.~\eqref{eq:m_reduced} and \eqref{eq:mtilde_reduced}, we find that $s(z)$ satisfies the following quartic polynomial:
\al{
    0 & = (z^2 \zeta^2 \psi^2) s(z)^4 + (2 z \zeta^2 \psi (\psi - \phi) s(z)^3  + (\zeta^2(\psi-\phi)^2  + z \zeta \phi \psi + z \eta \phi^2 \psi) s(z)^2 \\\nonumber
    &\qquad + (\zeta \phi(\psi-\phi) + \phi^2(z \phi + \eta(\psi-\phi)) s(z) + \phi^3\,
}
which agrees with the result in~\citep{pennington2017nonlinear} upon identifying $s(z) = -(1-\phi/\psi)/z-\phi/\psi G(z)$.

\subsection{Spectral density estimates}
\label{sec:experiments}
The self-consistent equations in Thm.~\ref{thm: sce} can be solved numerically by iterating Eq.~\eqref{eq: main sce m} until convergence, using numerical integration. By utilizing the Stieltjes inversion formula, Eq.~\eqref{eq: stieltjes inversion}, we can extract predictions for the limiting eigenvalue density of $\tfrac{1}{n_1} F^\top F$. The results show close agreement with empirical spectral simulations from several interesting practical datasets and synthetic distributions, see Figs.~\ref{fig:bias_effect_spectra} and \ref{fig:spectra}.

\begin{figure*}[t]
    \centering
    \includegraphics[width=\linewidth]{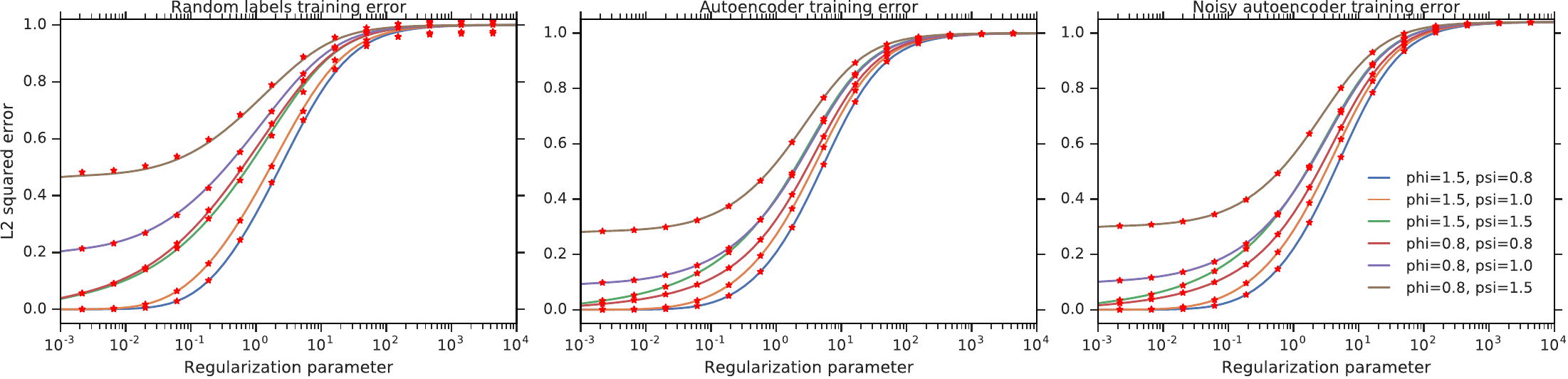} 
    \caption{Comparisons of simulated ridge regression error and our theoretical prediction. We use ReLU with $\sigma_X=\sigma_W=\sigma_B=1$ for all plots and vary the shape parameters $\phi$ and $\psi$. For simulations we use $m=2^{13}$ throughout. Note we also normalize the activation function so that $\mathbb{E}_b[\eta(b)] = 1$. \textbf{Left}: Our predictions for ridge regression with random labels are solid lines. Simulated losses are the red stars. \textbf{Center}: Autoencoder error. \textbf{Right}: Noisy autoencoder error with $\sigma_\e=0.2$.}
    \label{fig:shape_ridge_reg}
    \vspace{-0.2cm}
\end{figure*}
\section{Statistical implications}
\label{sec:ridge}

\subsection{Memorization capacity for noisy autoencoders}
First, we consider the problem of kernel ridge regression with random features given by ${F=f(WX;B)}$ and noisy regression targets given by $Y=AX + \epsilon$ for some random $A \in \mathbb{R}^{n_2 \times n_0}$ and Gaussian noise $\epsilon \in \mathbb{R}^{n_2 \times m}$ such that $\epsilon_{ij} \sim \mathcal{N}(0, \sigma^2_\epsilon)$ that are mutually independent and independent of $X$, $W$, and $B$. As is common in the literature of high-dimensional statistics, we assume an isotropic prior on $A$ such that $\E A^\top A = \tfrac{n_2}{n_0} \sigma^2_A I$. Note that when $\sigma_\epsilon=0$, we have the pure memorization setting studied by \citep{pennington2017nonlinear}. This problem provides a statistical interpretation for the role of the companion Stieltjes transform of Thm.~\ref{thm: sce}. The training loss for this problem is a measure of the memorization capacity of the model, and we study the average training loss over an ensemble of problems defined by $A$ and $\epsilon$.

\begin{corollary}\label{cor: ridge regression}
Let $W_2^*$ be the minimizer for regularized training loss
\eq{\label{eq: regularized training loss}
    \mathcal{L}(W_2) = \tfrac{1}{n_2 m} ||Y - W_2F||^2 + \tfrac{n_1}{n_2 m} \gamma ||W_2||^2\,,
}
with random features $F=f(WX;B)$. Then, with the same assumptions as Thm.~\ref{thm: sce}, the asymptotic average training error converges as
\al{ \label{eq: training error}
    E_{\text{train}} &= \E_{A,\epsilon} \tfrac{1}{n_2 m} || Y - W_2^*F||^2  \to-\gamma^2 \frac{d}{d\gamma}\big( \sigma^2_A \tilde{s}(-\gamma) +
\sigma^2_\epsilon  s(-\gamma) \big)\, .\footnote{Under stronger assumptions on $A$, the training loss consider as a random variable (without taking expectation over $A$ and $\epsilon$) can be shown to converge in probability.}
}
\end{corollary}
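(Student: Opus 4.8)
The plan is to solve the ridge regression problem in closed form, reduce the averaged training error to two resolvent traces, and then invoke Theorem~\ref{thm: sce} together with a short analyticity argument to pass to the limit.

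First I would write down the minimizer of $\mathcal{L}(W_2)$. Setting the gradient to zero gives the normal equations $Y F^\top = W_2^*\pa{F F^\top + n_1 \gamma I}$, so $W_2^* = Y F^\top \pa{F F^\top + n_1 \gamma I}^{-1}$ and the residual is $Y - W_2^* F = Y\pa{I - F^\top (F F^\top + n_1 \gamma I)^{-1} F}$. The push-through (Woodbury) identity $I - F^\top (F F^\top + n_1\gamma I)^{-1} F = n_1\gamma\pa{F^\top F + n_1\gamma I}^{-1} = \gamma\, G(-\gamma)$ then gives the clean form $Y - W_2^* F = \gamma\, Y\, G(-\gamma)$, so that, using symmetry of $G$ and cyclicity of the trace,
\[
\tfrac{1}{n_2 m}\|Y - W_2^* F\|^2 = \tfrac{\gamma^2}{n_2 m}\,\tr\!\pa{Y^\top Y\, G(-\gamma)^2}\,.
\]

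Next I would take the expectation over $A$ and $\epsilon$. Expanding $Y^\top Y = (AX + \epsilon)^\top (AX + \epsilon)$ and using independence, mean-zero-ness, $\E A^\top A = \tfrac{n_2}{n_0}\sigma_A^2 I$, and $\E \epsilon^\top\epsilon = n_2\sigma_\epsilon^2 I$, the cross terms vanish and $\E_{A,\epsilon} Y^\top Y = \tfrac{n_2\sigma_A^2}{n_0} X^\top X + n_2\sigma_\epsilon^2 I$. Hence
\[
E_{\text{train}} = \gamma^2\pa{\sigma_A^2\, \tfrac{1}{m}\tr\!\pa{\tfrac{1}{n_0}X^\top X\, G(-\gamma)^2} + \sigma_\epsilon^2\, \tfrac{1}{m}\tr G(-\gamma)^2}\,.
\]
The key observation is that $G(z)^2 = \tfrac{d}{dz}G(z)$, so the two traces above are precisely the $z$-derivatives of the two traces controlled by Theorem~\ref{thm: sce}, evaluated at $z = -\gamma$; since $\gamma > 0$ and $\tfrac{1}{n_1}F^\top F \succeq 0$, the point $-\gamma$ sits at distance $\gamma$ from the spectrum, so $\|G(-\gamma)\| \le 1/\gamma$ deterministically.

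Finally, Theorem~\ref{thm: sce} gives $\tfrac{1}{m}\tr G(z) \to s(z)$ and $\tfrac{1}{m}\tr(\tfrac{1}{n_0}X^\top X G(z)) \to \tilde s(z)$ in probability for every $z$ with $\Im z > 0$. The remaining step --- and the one I expect to be the main obstacle --- is to promote this pointwise convergence to convergence of the derivatives at the real point $-\gamma$. Both empirical traces are analytic on $\mathbb{C}\setminus[0,\infty)$ (the poles lie at the nonnegative eigenvalues of $\tfrac{1}{n_1}F^\top F$) and, by the resolvent bound above together with the assumptions of Theorem~\ref{thm: sce}, are uniformly bounded on compact subsets of that domain with probability tending to one. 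A Vitali/Montel normal-families argument then upgrades the convergence to locally uniform convergence on all of $\mathbb{C}\setminus[0,\infty)$, and Cauchy's integral formula for the derivative transfers this to $\tfrac{1}{m}\tr G(-\gamma)^2 \to s'(-\gamma)$ and $\tfrac{1}{m}\tr(\tfrac{1}{n_0}X^\top X G(-\gamma)^2) \to \tilde s'(-\gamma)$. Using $s'(-\gamma) = -\tfrac{d}{d\gamma}s(-\gamma)$ and $\tilde s'(-\gamma) = -\tfrac{d}{d\gamma}\tilde s(-\gamma)$ yields $E_{\text{train}} \to -\gamma^2\tfrac{d}{d\gamma}\pa{\sigma_A^2\,\tilde s(-\gamma) + \sigma_\epsilon^2\, s(-\gamma)}$, as claimed. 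The footnoted strengthening to convergence in probability of the unaveraged loss would follow by additionally showing the quadratic forms $\tr(X^\top A^\top A X\, G(-\gamma)^2)$ and $\tr(\epsilon^\top\epsilon\, G(-\gamma)^2)$ concentrate around their $A,\epsilon$-means, e.g.\ via Hanson--Wright under sub-Gaussian tails.
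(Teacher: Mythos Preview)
Your proposal is correct and follows essentially the same route as the paper: solve the ridge problem in closed form, use the push-through identity to write the residual as $\gamma\,Y\,G(-\gamma)$, take the expectation over $A,\epsilon$ to reduce to the two resolvent traces, and identify these as derivatives of $s$ and $\tilde s$. The only difference is that you supply the Vitali/normal-families step to justify extending the convergence from $\{\Im z>0\}$ to $-\gamma$ and passing to the derivative, whereas the paper simply writes the limiting derivative without comment; your extra care here is appropriate and does not change the argument's structure.
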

In particular, we see that the derivative of the Stieltjes transform $s'(-\gamma)$ measures the capacity to learn noisy labels, whereas $\tilde{s}'(-\gamma)$ measures pure autoencoding capacity. See Fig.~\ref{fig:shape_ridge_reg} for a comparison between these theoretical predictions and simulation.
\begin{proof}
The optimal weights for the regularized loss are given by
\[W_2^* = \tfrac{1}{n_1} YG(\gamma)F^\top  \quad \text{for} \quad G(\gamma) = (\tfrac{1}{n_1} F^\top F + \gamma I)^{-1},\]
resulting in training error
\al{
E_{\text{train}} &= \E_{A,\epsilon}\tfrac{1}{n_2 m} ||Y - W_2^*F||^2 \\\nonumber
&= \gamma^2 \tfrac{1}{n_2 m}\E_{A,\epsilon} \tr\pa{Y^\top  Y G(\gamma)^2} \\\nonumber
&= \gamma^2 \sigma^2_A \tfrac{1}{n_0 m} \tr\pa{X^\top  X G(\gamma)^2} + \gamma^2 \sigma^2_\epsilon \tfrac{1}{m} \tr\pa{G(\gamma)^2} \\\nonumber
&= \gamma^2 \sigma^2_A \tfrac{d}{d\gamma}\pa{ \tilde{s}(- \gamma) } + \gamma^2 \sigma^2_\epsilon \tfrac{d}{d\gamma}\pa{ s(-\gamma)} \\\nonumber
&= -\gamma^2 \big( \sigma^2_A \tilde{s}'(- \gamma) + \sigma^2_\epsilon s'(-\gamma) \big)\,. \qedhere
}
\end{proof}

\begin{figure}
    \floatbox[{\capbeside\thisfloatsetup{capbesideposition={right,top},capbesidewidth=7cm}}]{figure}[\FBwidth]
    {\caption{Performance on ridge-regularized noisy autoencoder with $\sigma_\epsilon = 1$, $\phi=1/2$, and $\psi = 1/2$. Theoretical predictions for training error (solid lines) and $1\sigma$ error bars for empirical simulations of finite networks ($n_0=192$, $n_1=384$, $m=384$) for various values of ridge regularization constant $\gamma$ as the activation function varies. \textbf{Top:} a single activation function $f$ is used. \textbf{Bottom:} the non-linearity is $f_p$, a $\text{Bernoulli}(p)$-mixture of a purely linear ($\zeta=1$) and purely non-linear ($\zeta=0$) function. Each simulation uses a randomly-chosen non-linearity having the specified values of $\zeta$, demonstrating that $E_{\text{train}}$ depends on the non-linearity solely through this constant. Red and blue stars denote minima.}\label{fig:mixture}}
    {\includegraphics[width=5cm]{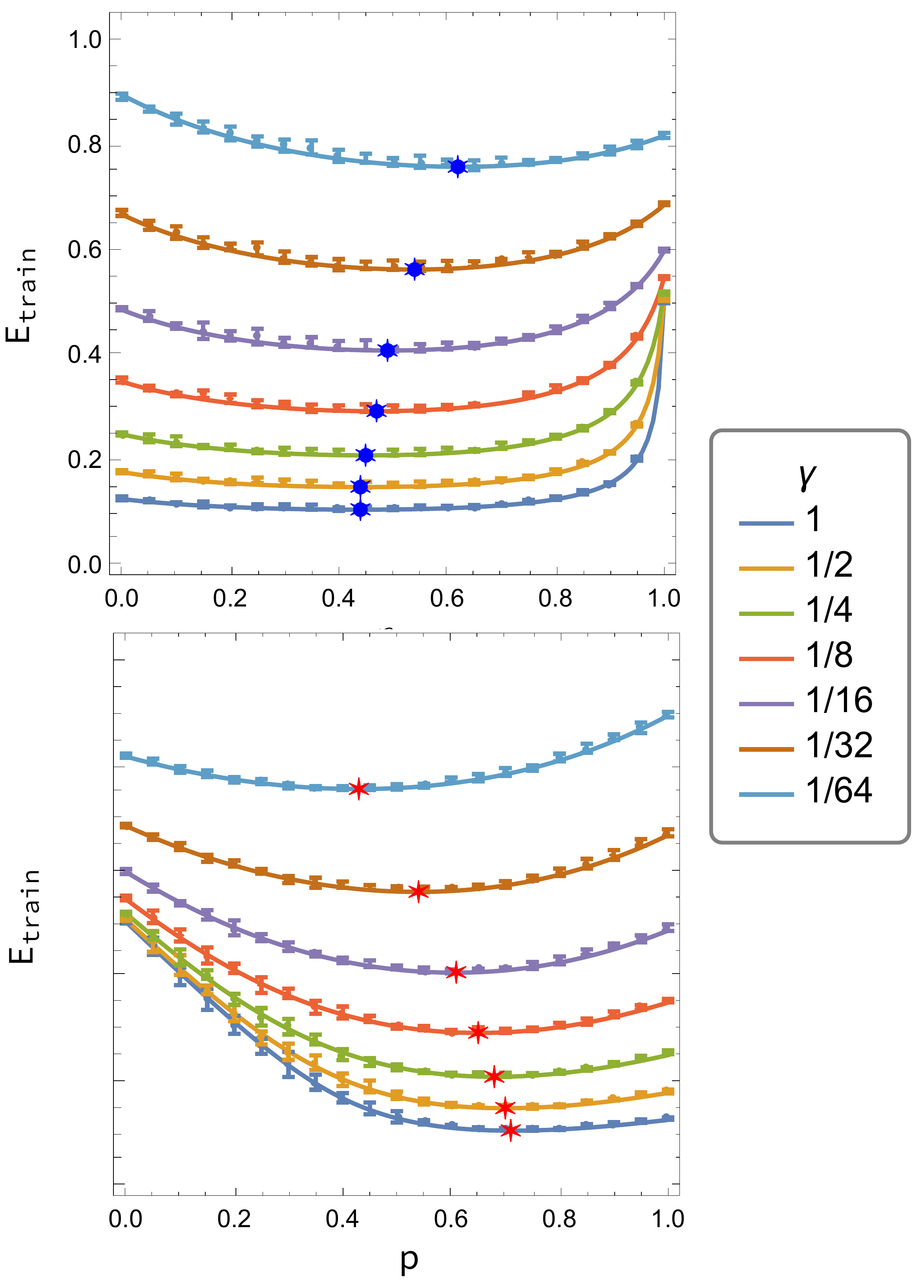}}
\end{figure}

\begin{remark}
As in~\citep{pennington2017nonlinear}, there is a scaling homogeneity in the $E_{\text{train}}$: an increase in the regularization constant $\gamma$ can be compensated by a decrease in scale of $W_2$, which, in turn, can be compensated by increasing the scale of $F$, which is equivalent to increasing $\eta(b)$ and $\zeta(b)$. Owing to this homogeneity, we are free to choose a normalization of the activation function for which $\mathbb{E}_{b\sim \mu_B}[\eta(b)] = 1$.
\end{remark}

\subsection{Nonlinear mixtures can generalize better than single nonlinearities}\label{sec_test_error}
The bias term in our random feature model can be viewed as one way of defining a distribution over activation functions. The choice of distribution, in general, affects the performance of the model on a given task---as quantified by the expectations in Eq.~\eqref{eq: main sce aux}. A key benefit of our model and analytical approach is that it permits nontrivial distributions of nonlinear activation functions.

Using Cor.\ref{cor: ridge regression}, one can show that mixtures of activation function can improve the model's capacity over the best single nonlinearity (see Sec.~\ref{sec_training_mixtures_better}). Fascinatingly, this benefit of mixtures is not restricted to the training loss but extends to the test loss. Specifically, we define a data generating process where $\mathbf{x}_i$ are $m$ iid samples from $\mathcal{N}(0, I_{n_0})$, and we label these points using a linear teacher such that $y_i = \beta^\top \mathbf{x}_i + \epsilon_i$, where $\beta_i$ and $\epsilon_i$ are iid Gaussians of zero mean and variances $1/\sqrt{n_0}$ and $\sigma_\epsilon^2$ respectively. We collect the data points $\mathbf{x}_i$ into the columns of a data matrix $X$ and labels $y_i$ into $Y$. Interestingly, in this high-dimensional limit the conditional distribution of a linear labeling function is asymptotically equivalent to a wide class of non-linear teacher neural networks (see \citep{mei2019generalization, adlam2020neural} for more details). Under our assumption on $X$, standard results from random matrix theory imply convergence of the spectrum of $X^\top X/n_0$ in distribution,  \cite{MarchenkoPastur,silverstein1995empirical}. Moreover, Assumption 1. in Sec.~\ref{subsec_main} is easily verified to hold with for probability converging to 1.

Without loss of generality, we may assume that $\E\eta(b) = 1$, since any rescaling of the activation functions $f(\cdot; b)$ can be absorbed into $W_2^*$.

The average test loss for this model is defined as
\eq{
    E_\text{test} = \E_{\beta,\epsilon} \E_\mathbf{x} \pa{ \beta^\top \mathbf{x} - W_2^* f(W \mathbf{x} ; B) }^2
}

Using the results of Thm.~\ref{thm: sce} and additional calculations (that we defer to the appendix, see \ref{sec_test_error_sm}), the test error can be characterized analytically.

Focusing first on the case of a single nonlinearity, \emph{i.e.} when $\mu_B$ is a delta mass at 0, we can use the expression for the test error to find the optimal hyperparameters to minimize the test error. Unsurprisingly given the data generating process, optimal performance is achieved by effectively performing linear regression with regularization to match the SNR. This can be obtained from the random feature model in the limit that $n_1\to\infty$ (so $\psi\to0$), which has the effect of removing the randomness from $W$ in the random feature kernel. More interestingly, the regularization in the model can be achieved with either the ridge parameter $\gamma$ or the activation function. Any configuration satisfying
\eq{\label{eq_optimal_reg}
    \frac{\gamma - \eta(0) + \zeta(0)}{\zeta(0)} = \sigma_\epsilon^2
}
is optimal. When the activation is linear, $\zeta(0)=\eta(0)=1$, and so $\gamma_\text{opt} = \sigma_\epsilon^2$, just as in a well-specified linear regression model. 

However for a fixed nonlinear activation function, $1=\eta(0)>\zeta(0)$, so Eq.~\eqref{eq_optimal_reg} implies a smaller $\gamma$ is optimal---suggesting an implicit regularizing effect of the nonlineaity. Strangely, the optimal $\gamma$ can sometimes be negative in Eq.~\eqref{eq_optimal_reg} to counteract over-regularization by the nonlinearity. Taking the opposite perspective and fixing $\gamma$, optimal performance is not always possible, as Eq.~\eqref{eq_optimal_reg} is not necessarily satisfiable since we require $\zeta\in[0, 1]$. That said, there are situations where appropriately choosing the nonlinearity can completely compensate for suboptimal $\gamma$.

When $\psi>0$, while the nonlinearity can still reduce degradation in performance, a single activation function is no longer able to completely compensate for suboptimal $\gamma$. In such situations a mixture of activation functions can help compensate further. The goal here is not to identify ``good" mixtures, since this will clearly be a dataset- and architecture-dependent question. Instead, we merely seek to demonstrate a proof-of-principle, namely that there exist non-trivial distributions over nonlinearities that can \emph{provably} outperform the best possible single nonlinearity. For this analysis, we consider a simple but nontrivial distribution over activation functions: a Bernoulli mixture of two different functions. In more detail, $\mu_B = \text{Bernoulli(p)}$ and 
\eq{\label{eq_mixture}
    f(z;b):= \begin{cases}
        \frac{x}{\sqrt{2-2p}} & \text{if } b = 0\\
        g(x) & \text{if } b = 1
    \end{cases},
}
where $g$ is such that $\eta(1) = \E g(N)^2 = 1/2p$. Note that this implies $\E \eta(B)=1$. We will optimize over $\zeta(1) = (\E \sigma_Z f'(N))^2$ and $p$.

To give a concrete example, we set $\phi=1/2$, $\psi=1/8$, $\sigma_\epsilon^2=2$, and $\gamma=2/10$. In this setting, a single activation function is unable to achieve optimal performance (see Fig.~\ref{fig_mixtures}). Specifically, the optimal test error of approximately 0.945 is achieved at $\zeta\approx 0.634$, which falls significantly short of the optimal test error of approximately 0.581 for $\gamma\approx 2.0$ and a linear activation function. 

The mixture model from Eq.~\eqref{eq_mixture} can significantly reduce this gap. By performing a simple grid search over $p$ and $\zeta(1)$, we find that $p=0.999$ and $\zeta(1)\approx0.364$ yields a test error of approximately 0.783. We illustrate these conclusions in Fig.~\ref{fig_mixtures}, and provide empirical results from simulations confirming our findings. 

\begin{figure*}
    \centering
    \begin{tabular}{cc}
    \includegraphics[width=0.4\linewidth]{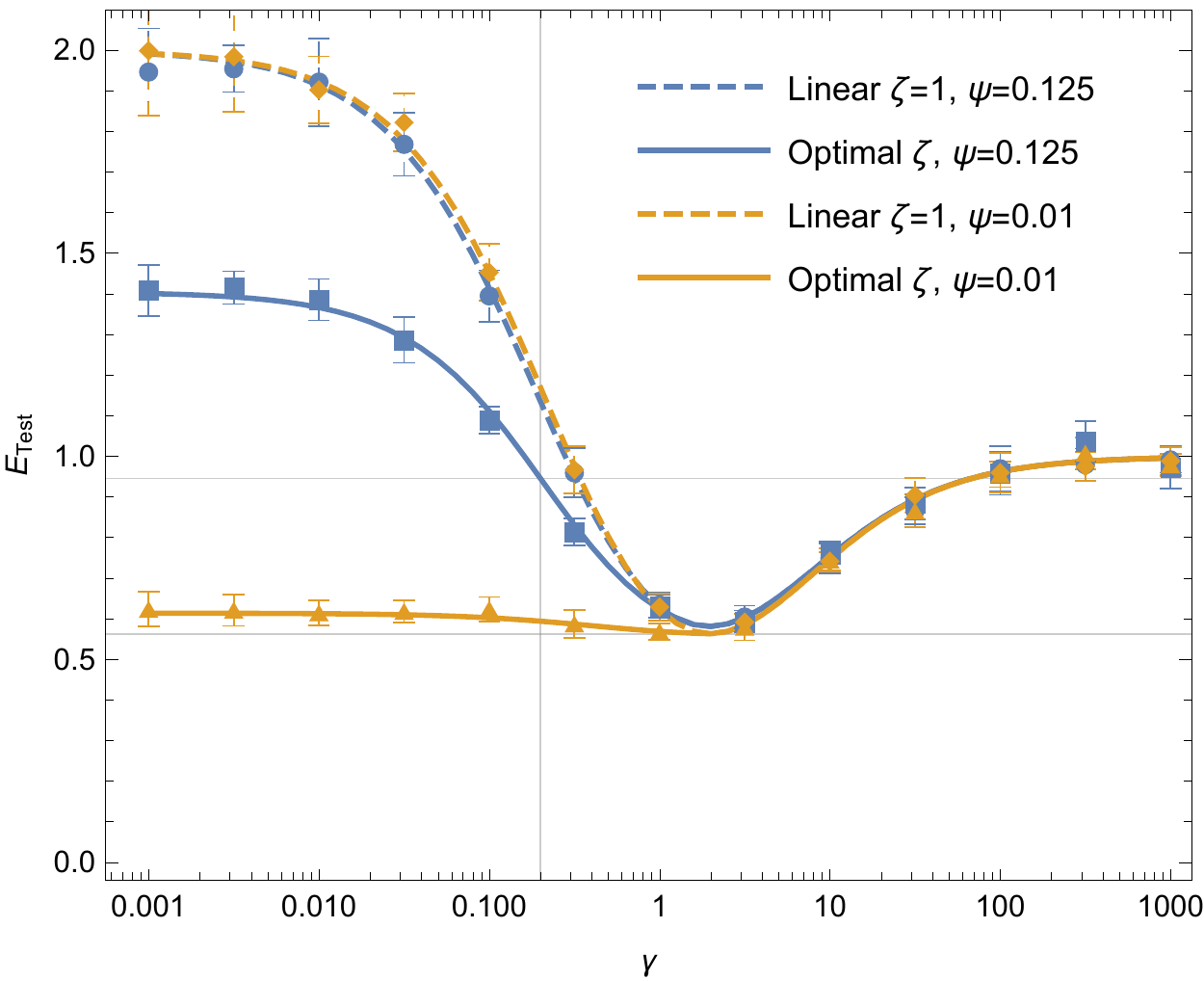} &
    \includegraphics[width=0.4\linewidth]{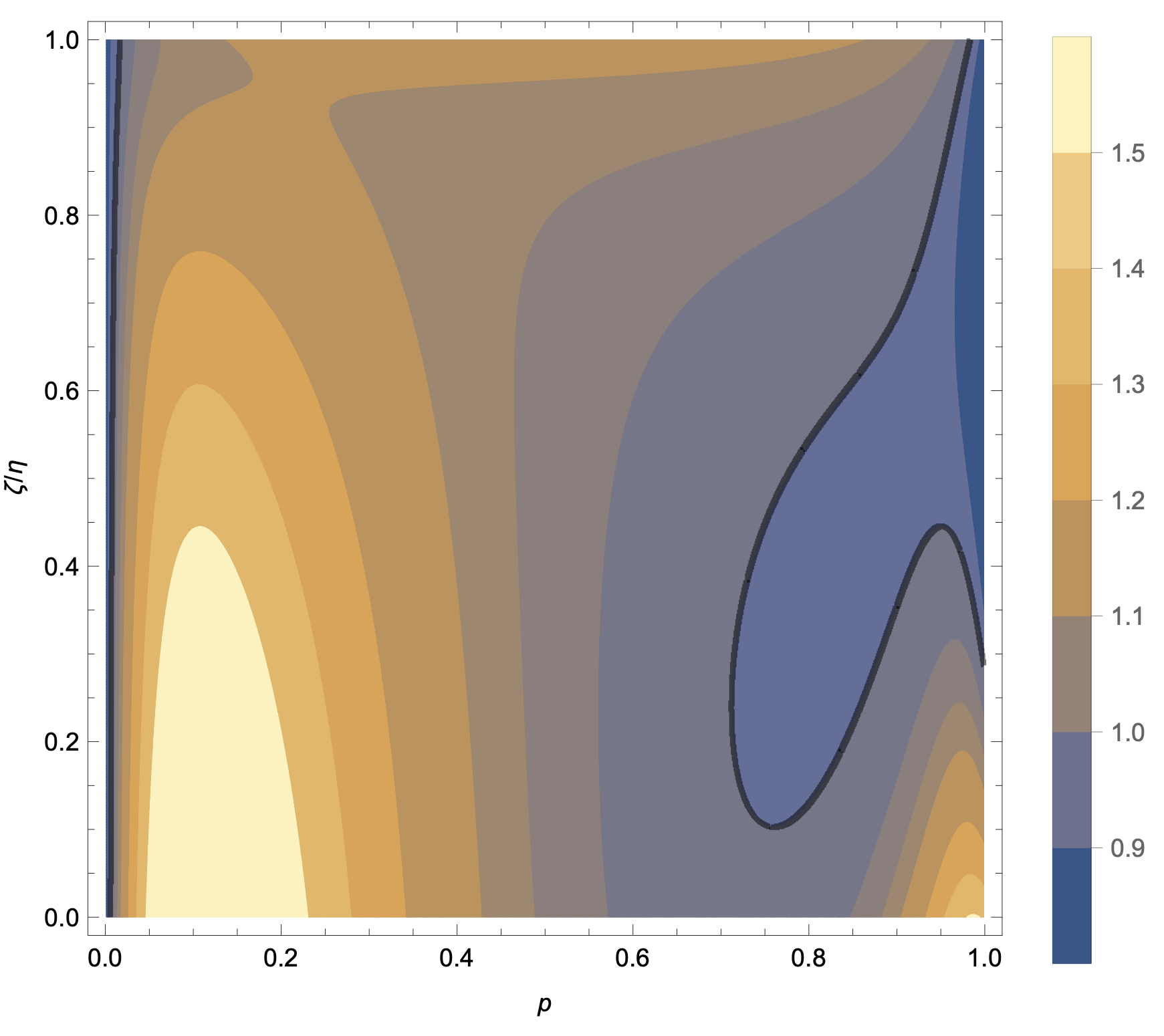}
    \end{tabular}
    \caption{We plot $E_\text{test}$ for random feature regression when the data are labeled by a linear model with $\phi=1/2$ and $\sigma_\epsilon^2=2$. \textbf{Left}: Plots of $E_\text{test}$ as a function of $\gamma$ for both the linear activation function (dashed) and an optimal nonlinear activation function (solid). When $\psi=1/100$ (orange) tuning $\zeta$ can almost overcome a too small $\gamma$ and recover optimal $E_\text{test}$. When $\psi=1/8$ (blue) tuning $\zeta$ reduces $E_\text{test}$ over $\zeta=1$, but the gap to optimal $E_\text{test}$ is large. The dots and errors bars ($\pm1$ s.d.) show simulations results for $n_0=1000$ over 10 trials. \textbf{Right}: Keeping $\psi=1/8$, we show a contour plot of $E_\text{test}$ for the binary mixture Eq.~\eqref{eq_mixture}. We outline the region where the mixture improves over all single nonlinearities with a grey line. The best setting found in our grid search is on the far right of the plot at $p=0.999$ and $\zeta(1)\approx0.364$. Simulating the binary mixture for $n_0=1000$ over 10 trials estimated $E_\text{test}$ as $0.785\,(\pm0.035)$.}
    \label{fig_mixtures}
\end{figure*}

\section{Conclusions}
In this work we studied the feature matrix $F=f(WX; B)$ where $W$ is a random matrix with iid Gaussian entries. Under mild assumptions on $X$ and $B$, we obtained an exact analytic formula, Eq.~\eqref{eq: main sce m}, that characterizes the Stieltjes transform of the spectral density of $F$. The result allowed us to describe the exact training loss of a ridge-regularized noisy autoencoder and the test error from fitting data labeled by a linear model in the high-dimensional, large-dataset limit, providing one of the first closed-form solutions to a non-trivial model in this limit. We found excellent agreement between the asymptotic predictions of Eq.~\eqref{eq: main sce m} and a variety of finite-dimensional empirical simulations.

We also advanced the interpretation of the bias $B$ as one particular way of parameterizing a distribution of activation functions. Indeed, our derivations proceed completely unchanged whether this distribution is of the traditional additive form $f(\cdot + B)$ or the more general $f(\cdot; B)$. By examining the latter, we showed that there are configurations in which a non-trivial distribution over activation functions provably outperforms the best possible single activation function. This opens the door to future investigations regarding optimal methods for parameterizing distributions over activation functions for approximate kernel methods, and suggests the possibility that mixtures of nonlinearities could be a useful design consideration when constructing neural network architectures.

\bibliography{references}

\begin{thebibliography}{10}

\bibitem{adlam2020neural}
B.~Adlam and J.~Pennington.
\newblock The neural tangent kernel in high dimensions: Triple descent and a
  multi-scale theory of generalization.
\newblock In {\em International Conference on Machine Learning}, pages 74--84.
  PMLR, 2020.

\bibitem{adlam2020understanding}
B.~Adlam and J.~Pennington.
\newblock Understanding double descent requires a fine-grained bias-variance
  decomposition.
\newblock In H.~Larochelle, M.~Ranzato, R.~Hadsell, M.~F. Balcan, and H.~Lin,
  editors, {\em Advances in Neural Information Processing Systems}, volume~33,
  pages 11022--11032. Curran Associates, Inc., 2020.

\bibitem{advani2017high}
M.~S. Advani, A.~M. Saxe, and H.~Sompolinsky.
\newblock High-dimensional dynamics of generalization error in neural networks.
\newblock {\em Neural Networks}, 132:428--446, 2020.

\bibitem{bai2008large}
Z.~Bai and W.~Zhou.
\newblock Large sample covariance matrices without independence structures in
  columns.
\newblock {\em Statistica Sinica}, pages 425--442, 2008.

\bibitem{banna2015limiting2}
M.~Banna and F.~Merlevede.
\newblock Limiting spectral distribution of large sample covariance matrices
  associated with a class of stationary processes.
\newblock {\em Journal of Theoretical Probability}, 28(2):745--783, 2015.

\bibitem{banna2015limiting}
M.~Banna, F.~Merlev{\`e}de, and M.~Peligrad.
\newblock On the limiting spectral distribution for a large class of symmetric
  random matrices with correlated entries.
\newblock {\em Stochastic Processes and their Applications}, 125(7):2700--2726,
  2015.

\bibitem{burda2005spectral}
Z.~Burda, J.~Jurkiewicz, and B.~Wac{\l}aw.
\newblock Spectral moments of correlated wishart matrices.
\newblock {\em Physical Review E}, 71(2):026111, 2005.

\bibitem{chizat2018global}
L.~Chizat and F.~Bach.
\newblock On the global convergence of gradient descent for over-parameterized
  models using optimal transport.
\newblock In {\em Advances in neural information processing systems}, pages
  3036--3046, 2018.

\bibitem{chizat2018note}
L.~Chizat and F.~Bach.
\newblock A note on lazy training in supervised differentiable programming.
\newblock In {\em 32nd Conf. Neural Information Processing Systems (NeurIPS
  2018)}, 2019.

\bibitem{d2020double}
S.~d?Ascoli, M.~Refinetti, G.~Biroli, and F.~Krzakala.
\newblock Double trouble in double descent: Bias and variance (s) in the lazy
  regime.
\newblock In {\em International Conference on Machine Learning}, pages
  2280--2290. PMLR, 2020.

\bibitem{elkaroui}
N.~El~Karoui.
\newblock The spectrum of kernel random matrices.
\newblock {\em The Annals of Statistics}, 38, 01 2010.

\bibitem{el2009concentration}
N.~El~Karoui et~al.
\newblock Concentration of measure and spectra of random matrices: Applications
  to correlation matrices, elliptical distributions and beyond.
\newblock {\em Annals of Applied Probability}, 19(6):2362--2405, 2009.

\bibitem{erdos2019matrix}
L.~Erdos.
\newblock The matrix dyson equation and its applications for random matrices.
\newblock {\em arXiv preprint arXiv:1903.10060}, 2019.

\bibitem{erdos2017dynamical}
L.~Erdos and H.-T. Yau.
\newblock A dynamical approach to random matrix theory.
\newblock {\em Courant Lecture Notes in Mathematics}, 28, 2017.

\bibitem{golub1979generalized}
G.~H. Golub, M.~Heath, and G.~Wahba.
\newblock Generalized cross-validation as a method for choosing a good ridge
  parameter.
\newblock {\em Technometrics}, 21(2):215--223, 1979.

\bibitem{hastie2019surprises}
T.~Hastie, A.~Montanari, S.~Rosset, and R.~J. Tibshirani.
\newblock Surprises in high-dimensional ridgeless least squares interpolation.
\newblock {\em arXiv preprint arXiv:1903.08560}, 2019.

\bibitem{hinton2012}
G.~Hinton, L.~Deng, D.~Yu, G.~E. Dahl, A.-r. Mohamed, N.~Jaitly, A.~Senior,
  V.~Vanhoucke, P.~Nguyen, T.~N. Sainath, et~al.
\newblock Deep neural networks for acoustic modeling in speech recognition: The
  shared views of four research groups.
\newblock {\em IEEE Signal Processing Magazine}, 29(6):82--97, 2012.

\bibitem{Jacot:2018:NTK:3327757.3327948}
A.~Jacot, F.~Gabriel, and C.~Hongler.
\newblock Neural tangent kernel: Convergence and generalization in neural
  networks.
\newblock In {\em Proceedings of the 32nd International Conference on Neural
  Information Processing Systems}, NIPS'18, pages 8580--8589, USA, 2018. Curran
  Associates Inc.

\bibitem{jacot2020kernel}
A.~Jacot, B.~Simsek, F.~Spadaro, C.~Hongler, and F.~Gabriel.
\newblock Kernel alignment risk estimator: Risk prediction from training data.
\newblock {\em Advances in Neural Information Processing Systems}, 33, 2020.

\bibitem{krizhevsky2012}
A.~Krizhevsky, I.~Sutskever, and G.~E. Hinton.
\newblock Imagenet classification with deep convolutional neural networks.
\newblock In {\em Advances in neural information processing systems}, pages
  1097--1105, 2012.

\bibitem{lampinen2018analytic}
A.~K. Lampinen and S.~Ganguli.
\newblock An analytic theory of generalization dynamics and transfer learning
  in deep linear networks.
\newblock In {\em International Conference on Learning Representations}, 2018.

\bibitem{lee2019wide}
J.~Lee, L.~Xiao, S.~Schoenholz, Y.~Bahri, R.~Novak, J.~Sohl-Dickstein, and
  J.~Pennington.
\newblock Wide neural networks of any depth evolve as linear models under
  gradient descent.
\newblock {\em Advances in neural information processing systems},
  32:8572--8583, 2019.

\bibitem{liao2018dynamics}
Z.~Liao and R.~Couillet.
\newblock The dynamics of learning: A random matrix approach.
\newblock In {\em 35th International Conference on Machine Learning, PMLR},
  2018.

\bibitem{liao:hal-01954933}
Z.~Liao and R.~Couillet.
\newblock {On the Spectrum of Random Features Maps of High Dimensional Data}.
\newblock In {\em {International Conference on Machine Learning (ICML 2018)}},
  Stockholm, Sweden, July 2018.
\newblock 13 pages (with Supplementary Material), 10 figure, ICML 2018.

\bibitem{lixin2007spectral}
Z.~Lixin.
\newblock {\em Spectral analysis of large dimensional random matrices}.
\newblock PhD thesis, National University of Singapore, 2006.

\bibitem{louart2018random}
C.~Louart, Z.~Liao, R.~Couillet, et~al.
\newblock A random matrix approach to neural networks.
\newblock {\em The Annals of Applied Probability}, 28(2):1190--1248, 2018.

\bibitem{MarchenkoPastur}
V.~A. Mar{\v{c}}enko and L.~A. Pastur.
\newblock Distribution of eigenvalues for some sets of random matrices.
\newblock {\em Mathematics of the USSR-Sbornik}, 1(4):457, 1967.

\bibitem{mei2019mean}
S.~Mei, T.~Misiakiewicz, and A.~Montanari.
\newblock Mean-field theory of two-layers neural networks: dimension-free
  bounds and kernel limit.
\newblock In {\em Conference on Learning Theory}, pages 2388--2464. PMLR, 2019.

\bibitem{mei2019generalization}
S.~Mei and A.~Montanari.
\newblock The generalization error of random features regression: Precise
  asymptotics and the double descent curve.
\newblock {\em Communications on Pure and Applied Mathematics}, 2019.

\bibitem{mei2018mean}
S.~Mei, A.~Montanari, and P.-M. Nguyen.
\newblock A mean field view of the landscape of two-layer neural networks.
\newblock {\em Proceedings of the National Academy of Sciences},
  115(33):E7665--E7671, 2018.

\bibitem{paul2014random}
D.~Paul and A.~Aue.
\newblock Random matrix theory in statistics: A review.
\newblock {\em Journal of Statistical Planning and Inference}, 150:1--29, 2014.

\bibitem{paul2009no}
D.~Paul and J.~W. Silverstein.
\newblock No eigenvalues outside the support of the limiting empirical spectral
  distribution of a separable covariance matrix.
\newblock {\em Journal of Multivariate Analysis}, 100(1):37--57, 2009.

\bibitem{pennington2017geometry}
J.~Pennington and Y.~Bahri.
\newblock Geometry of neural network loss surfaces via random matrix theory.
\newblock In {\em International Conference on Machine Learning}, pages
  2798--2806, 2017.

\bibitem{pennington2017nonlinear}
J.~Pennington and P.~Worah.
\newblock Nonlinear random matrix theory for deep learning.
\newblock In {\em Advances in Neural Information Processing Systems}, pages
  2637--2646, 2017.

\bibitem{pennington2018spectrum}
J.~Pennington and P.~Worah.
\newblock The spectrum of the fisher information matrix of a
  single-hidden-layer neural network.
\newblock In {\em Advances in Neural Information Processing Systems}, pages
  5410--5419, 2018.

\bibitem{rotskoff2019global}
G.~Rotskoff, S.~Jelassi, J.~Bruna, and E.~Vanden-Eijnden.
\newblock Global convergence of neuron birth-death dynamics.
\newblock In {\em International Conference on Machine Learning}, 2019.

\bibitem{rotskoff2018neural}
G.~M. Rotskoff and E.~Vanden-Eijnden.
\newblock Neural networks as interacting particle systems: Asymptotic convexity
  of the loss landscape and universal scaling of the approximation error.
\newblock {\em arXiv preprint arXiv:1805.00915}, 2018.

\bibitem{shazeer2017}
N.~Shazeer, A.~Mirhoseini, K.~Maziarz, A.~Davis, Q.~Le, G.~Hinton, and J.~Dean.
\newblock Outrageously large neural language models using sparsely gated
  mixtures of experts.
\newblock {\em ICLR}, 2017.

\bibitem{silverstein1995empirical}
J.~W. Silverstein and Z.~Bai.
\newblock On the empirical distribution of eigenvalues of a class of large
  dimensional random matrices.
\newblock {\em Journal of Multivariate analysis}, 54(2):175--192, 1995.

\bibitem{Silverstein95onthe}
J.~W. Silverstein and Z.~D. Bai.
\newblock On the empirical distribution of eigenvalues of a class of large
  dimensional random matrices, 1995.

\bibitem{TaoBook}
T.~Tao.
\newblock {\em Topics in random matrix theory}, volume 132.
\newblock American Mathematical Society Providence, RI, 2012.

\bibitem{wu2016}
Y.~Wu, M.~Schuster, Z.~Chen, Q.~V. Le, M.~Norouzi, W.~Macherey, M.~Krikun,
  Y.~Cao, Q.~Gao, K.~Macherey, et~al.
\newblock Google's neural machine translation system: Bridging the gap between
  human and machine translation.
\newblock {\em arXiv preprint arXiv:1609.08144}, 2016.

\bibitem{yaskov2014universality}
P.~Yaskov.
\newblock The universality principle for spectral distributions of sample
  covariance matrices.
\newblock {\em arXiv preprint arXiv:1410.5190}, 2014.

\end{thebibliography}
\bibliographystyle{abbrv}


\appendix

\begin{center}

\onecolumn
\textbf{{\large Supplementary Materials:} \\
A Random Matrix Perspective on Mixtures of Nonlinearities in High Dimensions}
\end{center}
\setcounter{equation}{0}
\setcounter{figure}{0}
\setcounter{table}{0}
\setcounter{page}{1}
\setcounter{section}{0}
\setcounter{remark}{0}
\makeatletter
\renewcommand{\theequation}{S\arabic{equation}}
\renewcommand{\thefigure}{S\arabic{figure}}

\section{Outline for the proof}

The main goal of the supplementary material is to prove Thm.~\ref{thm: sce}. This is achieved in three steps. The first step is to derive the leading order correlation structure (or expected kernel) for $F$ (Lem.~\ref{lem: cor structure}), and specify a multivariate Gaussian model with the same correlation structure (Subsec.~\ref{subsec: Flin}). We refer to this Gaussian model as a \emph{linearized model}, denoted $F_\text{lin}$, since it removes the nonlinearity $f$ from the random part of our random matrix ensemble. The matrix $F$ is interesting as the data induce covariances among columns and the biases among rows, and both these must be captured in the linearized model. Random matrices with covariances of this type have been studied before when they can be written as $A^{1/2}XBXA^{1/2}/n$ for $X$ with iid entries and $A$ and $B$  nonnegative definite Hermitian \cite{lixin2007spectral,burda2005spectral,paul2009no,el2009concentration}, and it is known they lead to coupled,
functional self-consistent equations \cite{paul2014random} such as Eq.~\eqref{eq: main sce m}.

Since $F$ is not of the form $A^{1/2}XBXA^{1/2}/n$, the second step is to show that the limiting Stieltjes transform for the kernel matrix $F^\top F$ is equivalent to a multivariate Gaussian random matrix model with an identical correlation structure, \emph{i.e.} equivalent to $F_\text{lin}$. Results of this form are common in random matrix theory, where under correlation assumptions on the entries the matrix Dyson equation determines the limiting behavior of the resolvent \cite{erdos2019matrix}. Moreover, this deterministic equation only depends on the second order moments through a super operator. Thus, under some assumptions on the correlation structure, random matrices with the same second order moments have the same asymptotic spectra.

These assumptions on the correlation structure come in many forms: the papers \cite{banna2015limiting2,banna2015limiting} have very general conditions when the matrix entries can be written as functions of an iid random field. Cor.~1.1 of \cite{bai2008large} gives conditions based on the concentration of quadratic forms that imply the limiting Stieltjes transform only depends on $t_{\a\beta}:=\E F_{a\a}F_{a\beta}$.\footnote{Note this cannot depend on $a$!} While this result requires that the rows of the matrix are i.i.d., the argument has been generalized from the identically distributed setting in \cite{yaskov2014universality}. Again the condition is based on the concentration of quadratic forms, see Remark 1 and Assumptions (A3) and (A4) of \cite{yaskov2014universality}.

The more general result of \cite{yaskov2014universality} is required for the following reason: Either $B$ can be consider as random or it can be conditioned on in all expectations. If $B$ is taken to be random and $b_a$ are drawn i.i.d. from $\mu_B$, then the rows of $F$ are also i.i.d., so the setting of \cite{bai2008large} applies. However, the concentration of the quadratic forms fails for $t_{\a\beta}:=\E_B\E_W[ F_{i\a}F_{i\beta}|B]$. Instead all expectations must be considered conditional on $B$, and since $B$ is independent of $W$, it is sufficient to take $B$ deterministic such that $\frac{1}{n_1}\sum_{a=1}^{n_1}\delta_{b_a} \to \mu_B$. Defining $t_{\a\beta}^{(a)}:=\E[ F_{a\a}F_{a\beta}|B]$, one can show concentration of the required quadratic forms, but the rows of $F$ are no longer identically distributed and \cite{bai2008large} does not apply. To this end, we consider all expectations in the proof to be taken over $W$ and be conditional on $B$ and $X$, \emph{i.e.} $\E[\cdot] \equiv \E[\cdot|B, X]$.

The third a final step of the proof is a derivation of a self-consistent equation specifying the limiting Stieltjes transform of the Gaussian model. We do this using the resolvent method (see \cite{erdos2017dynamical} for an introduction). While the application of this method to multivariate Gaussian covariance matrices is standard, we note the form of the SCE is new and that we include the derivation here for completeness.

\section{Correlation structure of \texorpdfstring{$F$}{}}\label{sec: cor of F}
To begin we derive an asymptotic form for correlation structure of $F$. For simplicity of presentation, we derive the results for $f(\cdot,B)\equiv f(\cdot+B)$, but the argument generalizes directly. We recall that $W$ is an $n_1\times n_0$ random matrix with mean 0 and variance $\sigma_W^2/n_0$ iid Gaussian entries. Define ${c}_{\a\beta}:=\sum_k X_{k\a}X_{k\beta}/n_0$. Recall that $X$ is assumed to satisfy
\eq{
    c_{\a\beta}-\delta_{\a\beta}\sigma_X^2=\calO\pa{n_0^{\e-1/2}}
}
for all $\a$ and $\beta$.

Observe that
\eq{\label{eq_gaussian_dist_of_z}
    Z_{a\a}:=\sum_{k=1}^{n_0} W_{ak}X_{k\a} \text{ and } Z_{b\beta}:= \sum_{k=1}^{n_0} W_{bk}X_{k\beta}
}
are jointly Gaussian. Moreover,
\eq{\label{eq_dist_Z}
	\E Z_{a\a} = 0 \quad \text{and} \quad \E Z_{a\a}Z_{b\beta} = \sigma_W^2 c_{\a\beta}\mathbf{1}\pa{a=b}.
}
Note in particular, that $Z_{a\a}$ and $Z_{b\beta} $ are independent if $a\neq b$. For convenience, we normalize ${c}_{\a\beta}$ and define $\tilde{c}_{\a\beta}:={c}_{\a\beta}/\sigma_X^2$ and $\e_\a:=\tilde{c}_{\a\a}-1$, and note $\tilde{c}_{\a\beta} = \calO(n^{\e-1/2})$ for $\a\neq\beta$ and $\e_\a=\calO(n^{\e-1/2})$ for all $\a$.

In order to specify the correlations, we employ transforms of the activation function $f$. The transforms are Gaussian integrals of $f$ at different locations and scales. Let $N\sim\mathcal{N}(0,1)$ and recall $\sigma_Z=\sigma_X\sigma_W$, then define
\eq{
	\xi_0(x) \!:= \E\qa{ f\pa{\sigma_Z N \!+\! x}},\quad\! \xi_1(x) \!:= \E\qa{N f\pa{\sigma_Z N \!+\! x} },\quad\! \xi_2(x) \!:= \E\qa{ \frac{(\sigma_{Z}^2\!-\!1) N^2}{2\sigma_{Z}^2}f\pa{\sigma_Z N \!+\! x}}\!,
}
\eq{
	\eta(x)\!\equiv\!\eta_0(x)\!:= \E\qa{ f(\sigma_Z N \!+\! x)^2},\quad\! \eta_{2}(x)\!:= \E\qa{ \frac{(\sigma_{Z}^2\!-\!1) N^2}{2\sigma_{Z}^2}f\pa{\sigma_Z N \!+\! x}^2}\!,\!\quad\!\text{and} \!\quad\! \zeta(x) \!:= \xi_1(x)^2
}
We also need the following simple lemma that can be proved with Taylor's theorem.

\lem{\label{lem: taylor expansion f bias}
	Suppose $\e$ is a small constant (\emph{i.e.} $|\e|<1/10$) and that $N\sim \mathcal{N}(0,1)$, then
	\eq{\label{eq_mean}
		\E_N f\left(\sigma_Z  \sqrt{1 +\e} N + b \right) = \xi_0(b) +\xi_2(b) \e+\calO(\e^2)
	}
	and
	\eq{\label{eq_var}
		\E_N f\left(\sigma_Z  \sqrt{1 +\e} N + b \right)^2 =  \eta(b) + \eta_2(b)\e+\calO(\e^2)
	}
}

\begin{proof}
    A natural approach to proving Lem.~\ref{lem: taylor expansion f bias} is to Taylor expand $f$ point-wise at $\sigma_Z N + b$ in the small quantity $\sigma_Z \pa{\sqrt{1+\e}-1} N + b$, and then take expectation over $N$. However, this argument requires additional regularity assumptions on $f$. Instead, we can write
    \eq{\label{eq_integral_over_z}
        \E_N f \left(\sigma_Z  \sqrt{1 +\e} N + b \right) = \int_\mathbb{R} f(z) \phi_\e(z) dz,
    }
    where $\phi_\e$ is the p.d.f. of $\mathcal{N}(b, \sigma_Z^2\pa{1+\e})$, and then Taylor expand $\phi_\e$ in $\e$ about $0$. Note when $\e=0$, $\phi_0$ is the p.d.f. of $\mathcal{N}(b,\sigma_Z^2)$ the distribution of $\sigma_Z N$ for $N\sim\mathcal{N}(0,1)$.
    
    Note that the function $\e\mapsto \phi_\e(z)$ is $C^\infty$ in an open interval $I$ containing all values of $\e$ allowed in the lemma statement and all for all $z\in\mathbb{R}$. Moreover, this function and its derivatives are bounded uniformly in $z$ over the same open interval $I$, since we assume $\sigma_X>0$ and $\sigma_W>0$.\footnote{In general, the derivatives look like $\phi_\e(z)p_\e(z)$ for some polynomial in $p_\e$ with $\e$-dependent coefficients that are finite for $\e$ in the open interval $I$.} Thus, for all $z$, we have the Taylor expansion of $\phi_\e(z)$:
    \eq{\label{eq_taylor_phi}
        \phi_\e(z) = \phi_0(z) + \e \frac{(z-b)^2 - \sigma_Z^2}{2 \sigma_Z^2} \phi_0(z) + \e^2 R_\e(z),
    }
    where $R_\e(z)$ is a remainder term. Using the Lagrange form of the remainder, we can write\footnote{The derivative is with respect to $\e$ in the display below.}
    \eq{
        R_\e(z) = \frac{ \phi_{\e'}^{(2)}(z) }{2} \e^2 = \e^2 p_{\e'}(z) \phi_{\e
        '}(z) 
    }
    for some $\absa{\e'}\leq\e$, where $p_\e(z)$ is a degree-4 polynomial with $\e$-dependent coefficients that are finite for $\e$ in the open interval $I\ni \e'$.

    Now, we can use Eqs.~\eqref{eq_integral_over_z} and \eqref{eq_taylor_phi} to see
    \eq{\label{eq_intergral_taylor}
        \E_N f \left(\sigma_Z  \sqrt{1 \!+\!\e} N \!+\! b \right) =\E_N\qa{ f \left(\sigma_Z N \!+\! b \right)} + 
        \e\, \E_N\qa{ f \left(\sigma_Z N \!+\! b \right) \frac{(\sigma_Z^2\!-\!1) N^2}{2\sigma_Z^2} } + \int_\mathbb{R} f(z) R_\e(z) dz.
    }
    The first two term of the right-hand side of Eq.~\eqref{eq_intergral_taylor} are as expected, and for the last term we see
    \eq{
        \int_\mathbb{R} f(z) R_\e(z) dz \leq |\e|^2 \pa{\int_\R |f(z)|^2 \phi_\e(z)dz \int_\R |p_\e(z)|^2 \phi_\e(z)dz }^{1/2} =\calO(\e^2),
    }
    by assumption on $f$. This proves Eq.~\eqref{eq_mean}. An identical argument applied to $f(z)^2$ proves \eqref{eq_var}.
\end{proof}

\paragraph{Without loss of generality, $\E F_{a\a}=0$.} 

Eq.~\eqref{eq_mean} implies
\eq{
    \E F_{a\a} =\E_{N\sim \mathcal{N}(0, \sigma_{W}^2 c_{\a\a}) } [f(N + b_a)] =\xi_0(b_a) +\xi_2(b_a) \e_\a+\calO(\e_\a^2).
}
Thus, the two leading order terms of $\E[F|B]$ are both low-rank and so cannot change the spectrum or Stieltjes at order 1 (for more detail see \cite{bai2008large}), and the remainder term has Frobenius norm bounded by $\calO(n_0^{2\e})$ and so also cannot affect the spectrum or Stieltjes transform at order 1. In the notation above, we say $\xi_0(b)=0$ for all $b$.

We now derive the correlation structure of $F$ conditional on $B$ (recall $X$ is deterministic, but we may view this as conditioning on $X$).
\begin{lemma}\label{lem: cor structure}
    To leading order the correlation structure of $F$ is 
    \eq{
        \E[F_{a\a}F_{b\beta} | B]=\begin{cases}
            0   & \text{if } a\neq b \\
             \eta(b_a) & \text{if } \a=\beta \text{ and } a=b \\
            \zeta(b_a)c_{\a\beta} & \text{if } \a\neq\beta \text{ and } a=b 
        \end{cases}.
    }
\end{lemma}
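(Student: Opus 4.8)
The plan is to compute the correlation entrywise. By Eq.~\eqref{eq_gaussian_dist_of_z}, $F_{a\a}=f(Z_{a\a}+b_a)$ with the family $(Z_{a\a})$ jointly Gaussian, so $\E[F_{a\a}F_{b\b}|B]$ is a Gaussian integral depending only on the covariance of the pair $(Z_{a\a},Z_{b\b})$, and the computation splits into three regimes. When $a\neq b$, Eq.~\eqref{eq_dist_Z} shows $Z_{a\a}$ and $Z_{b\b}$ are independent, hence $F_{a\a}$ and $F_{b\b}$ are independent; combined with the normalization $\xi_0\equiv 0$ (i.e.\ $\E F_{a\a}=0$) fixed just above, this yields $\E[F_{a\a}F_{b\b}|B]=0$ exactly. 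So it remains to handle $a=b$, where $(Z_{a\a},Z_{a\b})$ is a centered bivariate Gaussian with variances $\sigma_W^2 c_{\a\a}$ and $\sigma_W^2 c_{\b\b}$ and covariance $\sigma_W^2 c_{\a\b}$.

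For the diagonal entries ($\a=\b$) one has $Z_{a\a}\sim\mathcal N(0,\sigma_Z^2(1+\e_\a))$, so $\E[F_{a\a}^2|B]=\E_N f(\sigma_Z\sqrt{1+\e_\a}\,N+b_a)^2$, and Eq.~\eqref{eq_var} of Lem.~\ref{lem: taylor expansion f bias} gives this equals $\eta(b_a)+\eta_2(b_a)\e_\a+\calO(\e_\a^2)=\eta(b_a)+\calO(n_0^{\e-1/2})$, i.e.\ $\eta(b_a)$ to leading order. For the genuinely off-diagonal entries ($\a\neq\b$, $a=b$), set $\rho_{\a\b}:=c_{\a\b}/\sqrt{c_{\a\a}c_{\b\b}}$, which is $\calO(n_0^{\e-1/2})$ by Assumption~1, and write $Z_{a\a}=\sigma_Z\sqrt{1+\e_\a}\,U$, $Z_{a\b}=\sigma_Z\sqrt{1+\e_\b}\,V$ with $(U,V)$ a standard bivariate Gaussian of correlation $\rho_{\a\b}$. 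I then expand $\E[f(Z_{a\a}+b_a)f(Z_{a\b}+b_a)]$ in powers of $\rho_{\a\b}$ using the Mehler/Hermite expansion $\sum_{k\ge 0}\hat f_{\a,k}\hat f_{\b,k}\,\rho_{\a\b}^k$, where $\hat f_{\cdot,k}$ is the $k$-th Hermite coefficient of $u\mapsto f(\sigma_Z\sqrt{1+\e_\cdot}\,u+b_a)$ (equivalently, differentiating in the covariance entry via Price's theorem, $\partial_{\Sigma_{12}}\E[f(Z_1+b)f(Z_2+b)]=\E[f'(Z_1+b)f'(Z_2+b)]$, which avoids any smoothness assumption on $f$). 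The $k=0$ term is the product of two one-dimensional means, each equal to $\xi_0(b_a)+\calO(n_0^{\e-1/2})=\calO(n_0^{\e-1/2})$ by Eq.~\eqref{eq_mean}, so it is $\calO(n_0^{2\e-1})$; the $k=1$ term is $\hat f_{\a,1}\hat f_{\b,1}\rho_{\a\b}$ with $\hat f_{\cdot,1}=\E[U f(\sigma_Z U+b_a)]+\calO(n_0^{\e-1/2})=\xi_1(b_a)+\calO(n_0^{\e-1/2})$, hence equal to $\xi_1(b_a)^2\rho_{\a\b}+\calO(n_0^{2\e-1})=\zeta(b_a)\rho_{\a\b}+\calO(n_0^{2\e-1})$; and the tail $\sum_{k\ge 2}$ is at most $\rho_{\a\b}^2$ times the geometric mean of the two second moments $\E f(\sigma_Z\sqrt{1+\e_\cdot}\,U+b_a)^2$, hence $\calO(n_0^{2\e-1})$, using only finiteness of the second moment of $f$. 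Since $\rho_{\a\b}=\tilde c_{\a\b}/\sqrt{(1+\e_\a)(1+\e_\b)}=\tilde c_{\a\b}+\calO(n_0^{2\e-1})$, collecting the three terms gives $\E[F_{a\a}F_{a\b}|B]=\zeta(b_a)\,\tilde c_{\a\b}+\calO(n_0^{2\e-1})$, which is the claimed leading-order form (in the rescaled covariance $\tilde c_{\a\b}=c_{\a\b}/\sigma_X^2$).

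The hard part will be justifying this expansion under the paper's weak hypotheses on $f$ (differentiable almost everywhere, with only its first four Gaussian moments finite), where a naive pointwise Taylor expansion of $f$ is not licensed. I would handle this exactly as in the proof of Lem.~\ref{lem: taylor expansion f bias}: transfer all of the dependence on $\e_\a,\e_\b,\rho_{\a\b}$ onto the bivariate Gaussian density, which is $C^\infty$ in these parameters with derivatives of the form (polynomial)$\,\times\,$(density) bounded uniformly over the relevant ranges (using $\sigma_X,\sigma_W>0$), and then bound the remainders by Cauchy--Schwarz against $\E|f(N;b)|^2<\infty$; the Hermite-tail estimate above is just the clean $L^2$ incarnation of this. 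A secondary point, not an obstacle for this lemma but worth recording, is that these $\calO(n_0^{\e-1/2})$ and $\calO(n_0^{2\e-1})$ entrywise errors accumulate over the $\calO(mn_1)$ entries of the correlation profile; the assertion that such a perturbation does not affect the limiting Stieltjes transform is precisely the universality input (cf.\ \cite{bai2008large,yaskov2014universality}) invoked at the next stage of the proof, and is not needed here.
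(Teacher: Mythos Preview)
Your proof is correct and structurally identical to the paper's: the $a\neq b$ case by independence and $\xi_0\equiv 0$, the diagonal case via Eq.~\eqref{eq_var} of Lem.~\ref{lem: taylor expansion f bias}, and the off-diagonal $a=b,\alpha\neq\beta$ case by an expansion in the small covariance parameter. The only difference is a matter of packaging in the off-diagonal case: the paper Taylor-expands the bivariate Gaussian density $\varphi_{\e_\alpha,\e_\beta,\tilde c_{\alpha\beta}}$ jointly in all three small parameters about $(0,0,0)$ (a direct two-variable analogue of the proof of Lem.~\ref{lem: taylor expansion f bias}), whereas you first apply the Mehler/Hermite expansion in the correlation $\rho_{\alpha\beta}$ and then control the $\e_\alpha,\e_\beta$ dependence inside the low-order Hermite coefficients. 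Your Hermite tail bound via Parseval/Cauchy--Schwarz in $L^2(\gamma)$ is slightly cleaner than tracking a bivariate Lagrange remainder and uses only finiteness of $\E f(N;b)^2$; the paper's approach has the virtue of being a literal reprise of the one-dimensional argument. Both routes land on $\zeta(b_a)\,\tilde c_{\alpha\beta}+\calO(n_0^{2\e-1})$, and you are right that this is the normalized $\tilde c_{\alpha\beta}=c_{\alpha\beta}/\sigma_X^2$ rather than $c_{\alpha\beta}$ as written in the lemma statement.
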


\begin{proof}
Using the observation in Eq.~\eqref{eq_gaussian_dist_of_z}, we repeatedly rewrite the expectations in $F^\top F$ as we can reduce the expectation over $W$ to an expectation over at most two correlated Gaussian random variables. By independence (see Eq.~\eqref{eq_dist_Z}), $\E F_{a\a}F_{b\beta}=\E F_{a\a}\cdot\E F_{b\beta}=0$ for $a\neq b$. Using Lem.~\ref{lem: taylor expansion f bias}, we get
\eq{\label{eq: variance bias-case}
	\E f(Z_{a\a})^2 = \eta({b}_{a}) + \eta_{2}({b}_{a}) \e_\a+\calO(\e_\a^2).
}

For the covariance calculation, $\E F_{a\alpha}F_{a\beta} $, we repeat the argument in the proof of Lem.~\ref{lem: taylor expansion f bias} except we define  $\varphi_{\e_1,\e_2,\rho}(z_1,z_2) $ as the bivariate p.d.f of
\eq{
    \mathcal{N}\pa{ \begin{pmatrix} 0\\0\end{pmatrix} ,\sigma_Z^2\begin{pmatrix}  1+\e_\alpha & \tilde{c}_{\alpha\beta} \\ \tilde{c}_{\alpha\beta} & 1+\e_\beta \end{pmatrix} }.
}
We then perform a multivate Taylor expansion about $(0,0,0)$ in $(\e_\alpha,\e_\beta,\tilde{c}_{\alpha\beta})$ to first-order. Note that under $\varphi_{0,0,0}$ the coordinates are independent and $\varphi_{0,0,0}(z_1,z_2)=\phi_0(z_1)\phi_0(z_2)$, so we can factor any integral over $z_1$ and $z_2$ using Fubini's theorem. We skip the details for the remainder term, as they are similar to before, but the leading order terms are
\al{
    \E F_{a\alpha}F_{a\beta} &= \int_{\R^2} f(z_1)f(z_2) \varphi_{\e_1,\e_2,\rho}(z_1,z_2) dz_1dz_2 \\
    &= \int_{\R^2} f(z_1)f(z_2) \varphi_{0,0,0}(z_1,z_2) dz_1dz_2 \\
    &\qquad+ \e_1\int_{\R^2} \frac{(z_1-b)^2 - \sigma_Z^2}{2 \sigma_Z^2} f(z_1)f(z_2) \varphi_{0,0,0}(z_1,z_2) dz_1dz_2 \\
    &\qquad+ \e_2\int_{\R^2} \frac{(z_2-b)^2 - \sigma_Z^2}{2 \sigma_Z^2} f(z_1)f(z_2) \varphi_{0,0,0}(z_1,z_2) dz_1dz_2 \\
    &\qquad+ \tilde{c}_{\alpha\beta}\int_{\R^2} \frac{ (z_1-b)(z_2-b) }{\sigma_Z^2} f(z_1)f(z_2) \varphi_{0,0,0}(z_1,z_2) dz_1dz_2 + R,
}
where $R=\calO(n_0^{2\e-1})$.  Using Fubini's theorem and the assumption that $\int_R f(z)\phi_0(z)dz=0$, we find that 
\eq{
    \E F_{a\alpha}F_{a\beta} = \tilde{c}_{\alpha\beta} \pa{\int_{\R} \frac{z-b}{\sigma_Z} f(z)\phi_{0}(z) dz}^2 + \calO(n_0^{2\e-1}).
}

\end{proof}

\subsection{Linearized model}\label{subsec: Flin}
Define
\eq{
	\Flin := \mathcal{C} \Theta^1 \Sigma + \pa{\mathcal{V}^2-\mathcal{C}^2}^{1/2} \Theta^2,
}
where: $\Theta^1$ and $\Theta^2$ are $n_1\times m$ matrices that have iid Gaussian entries with mean 0 and variance $1/n_1$; 2) $\mathcal{V}$ and $\mathcal{C}$ are $n_1\times n_1$ diagonal matrices with entries
\eq{
	\mathcal{V}_a^2 =\eta(b_a) \quad\text{and}\quad \mathcal{C}_a^2 =\zeta(b_a),
}
for a bias vector $b$; 3) $\Sigma$ is a matrix square root of $\frac{1}{n_0}X^\top X$.

A simple calculation shows that the entries of the matrix $\Flin$ match the first and second mixed moments of $F$ given in Lem.~\ref{lem: cor structure} up to residuals of size $\calO(n_0^{2\e-1})$ in the off-diagonal entries and $\calO(n_0^{\e-1/2})$ in the diagonal entries. Therefore, the matrix of residuals has Frobenius norm at most $\calO(n_0^{2\e})$, which implies the limiting spectra of $[\E F_{1\a}F_{a\beta}]_{\a\beta}$ and $[\E F^{\text{lin}}_{1\a}F^{\text{lin}}_{a\beta}]_{\a\beta}$ agree to $\calO(1)$. However, unlike $F$, $\Flin$ is linear in the random matrices $\Theta^1$ and $\Theta^2$; in this sense, it is a linearization of $F$.

\section{Verifying Assumptions (A3) and (A4) of \texorpdfstring{\cite{yaskov2014universality}}{}}

We define
\eq{
    \Sigma^{(a)}_{\a\beta}:=\E F_{a\a} F_{a\beta}= \E_{N_\a,N_\beta}\qa{f(N_\a+b_a)f(N_\beta+b_a)}
}
for
\eq{\label{eq_mvg}
    (N_\a,N_\beta)\sim \mathcal{N}\pa{ \begin{pmatrix}0\\0\end{pmatrix} ,\sigma_W^2 \begin{pmatrix} c_{\a\a} &  c_{\a\beta} \\ c_{\beta\a} &  c_{\beta\beta} \end{pmatrix} },
}
whose leading order behavior we have understood in Lem.~\ref{lem: cor structure}. 

\paragraph{Assumption (A3).} Define 
\eq{
    \calQ:= \sum_{\a,\beta=1}^{m} F_{a\a}F_{a\beta} A_{\a\beta} - \tr\pa{\ttil{} A}.
}
By Chebyshev's inequality, suffices to show $\E \calQ=0$ and $\E|\calQ|^2 = o(n_0^2)$ for all $a$ uniformly in all real symmetric positive semi-definite $A$ of bounded norm. The condition $\E \calQ=0$ is easily verified:
\eq{
    \E \calQ=\sum_{\a,\beta=1}^{m} \E\qa{F_{a\a}F_{a\beta} A_{\a\beta} - \ttil{\a\beta}A_{\beta\a}}=\sum_{\a,\beta=1}^{m} \E\qa{F_{a\a}F_{a\beta} - \ttil{\a\beta}}A_{\beta\a} = 0.
}
For the second condition, we can use the equivalent condition of Cor.~1.1 of \cite{bai2008large} for each choice of $a$.

Note that the definition of the multivariate Gaussian in Eq.~\eqref{eq_mvg} easily extends to more indices, $\a$, $\beta$, $\a'$, $\beta'$, \emph{etc.} In the following calculation it is useful to project out the correlated components of these Gaussian variables, \emph{i.e.} to write
\eq{\label{eq_correlated_gaussian_expansion}
    N_\a = N_\a^{(\a'\beta')} + C_{\a}^{\a'} N_{\a'} + C_{\a}^{\beta'} N_{\beta'},
}
where $N_\a^{(\a'\beta')}$, $N_{\a'}$, and $N_{\a'}$ are mean zero Gaussians such that $N_\a^{(a'\beta')}$ is independent of $N_{\a'}$ and $N_{\beta'}$ and $C_{\a}^{\a'}$ and $ C_{\a}^{\beta'}$ are constants. Moreover, our assumption on $X$ implies that $C_{\a}^{\a'}$ and $ C_{\a}^{\beta'}$ are $\calO(n_0^{\e-1/2})$, or in words, the correlations are small. We also know that $\mathbb{V}[N_\a]-\mathbb{V}[N_\a^{(\a'\beta')}]=\calO(n_0^{\e-1/2})$. This is exactly the same idea as in Sec.~\ref{sec: cor of F} but extended to functions that contain more entries of $F$. We note that Eq.~\eqref{eq_correlated_gaussian_expansion} is not necessarily unique.

For functions $\mathcal{F}$ and $\mathcal{G}$, the general principle is that $\E[\mathcal{F}(F_{a\a_1},\ldots,F_{a\a_k})\mathcal{G}(F_{a\beta_1},\ldots,F_{a\beta_l})]$ is close to zero when $\E\mathcal{F}(F_{a\a_1},\ldots,F_{a\a_k})\cdot \E\mathcal{G}(F_{a\beta_1},\ldots,F_{a\beta_l})=0$ and $\{\a_1,\ldots,\a_k\}\cap \{\beta_1,\ldots,\beta_l\}=\emptyset$. More careful bookkeeping is required when higher-order cancellations are necessary to obtain the leading-order behavior.

The first condition of Cor.~1.1, Eq.~(1.4), is straightforward to verify:
\eq{
    \E\absa{F_{a\a}F_{a\beta} - \ttil{\a\beta}}^2 \leq 4\E F_{a\a}^2F_{a\beta}^2 + 4(\ttil{\a\beta})^2,
}
which is finite by assumption (see Sec.~\ref{sec_pre}) and so is certainly $o(n_0)$ uniformly in $\a$, $\beta$, and $a$.

The next condition, Eq.~(1.5), is more involved. We have to show
\eq{\label{eq_sum_to_bound}
    \sum_{\Lambda} \pa{ \E\pa{ F_{a\a}F_{a\beta} -\ttil{\a\beta} }\pa{ F_{a\a'}F_{a\beta'} - \ttil{\a'\beta'} } }^2 = o(n_0^{2})
}
uniformly in $a$, where
\begin{equation*}
    \Lambda := \{ (\a,\beta,\a',\beta'): 1\leq \a,\beta,\a',\beta'\leq m \} \setminus \{(\a,\beta,\a',\beta'): \a=\a'\neq \beta=\beta' \text{ or } \a=\beta'\neq \a'=\beta \} .
\end{equation*}
We split the sum, Eq.~\eqref{eq_sum_to_bound}, into several pieces:
\al{
    &\sum_\a \qa{ \E \pa{ F_{a\a}^2 - \ttil{\a\a} }^2  }^2, \label{eq_1}\\
    4& \sum_{\a\neq\beta} \qa{\E\pa{ F_{a\a} F_{a\beta} - \ttil{\a\beta} }\pa{ F_{a\a}^2 - \ttil{\a\a} }}^2,   \label{eq_2}\\
    & \sum_{\a\neq\beta} \qa{\E \pa{ F_{a\a}^2 - \ttil{\a\a} } \pa{ F_{a\beta}^2-\ttil{\beta\beta} } }^2,   \label{eq_3}\\
    2&\sum_{\a,\beta,\a'}^{\text{distinct}} \qa{\E \pa{ F_{a\a}^2-\ttil{\a\a} } \pa{ F_{a\beta}F_{a\a'}-\ttil{\beta\a'} } }^2,   \label{eq_4}\\
    4&\sum_{\a,\beta,\a'}^{\text{distinct}} \qa{\E \pa{ F_{a\a}F_{a\beta} - \ttil{\a\beta} } \pa{ F_{a\a}F_{a\a'} -\ttil{\a\a'} } }^2,   \label{eq_5}\\
    \text{and} \quad & \sum_{\a,\beta,\a'\beta'}^{\text{distinct}} \qa{\E \pa{ F_{a\a}F_{a\beta} - \ttil{\a\beta} } \pa{ F_{a\a'}F_{a\beta'} - \ttil{\a'\beta'} } }^2 .  \label{eq_6}
}
These are based on the six possible ways of partitioning the four indices. All indices in the above sums are distinct, and we will see that the addition correlations when some indices are equal are compensated by the lower combinatorial factor from the sum.

\paragraph{Eq.~\eqref{eq_1} is $o(n_0^2)$, since the summands are $o(n_0)$:} We have $\E \pa{ F_{a\a}^2 - \ttil{\a\a} }^2 \leq 4\E  F_{a\a}^4 + 4(\ttil{\a\a})^2$, which is $\calO(1)$ by assumption and thus $o(n_0)$. 

\paragraph{Eq.~\eqref{eq_2} is $o(n_0^2)$, since the summands are $o(1)$:} First consider the term
\begin{equation*}
    \E\pa{ F_{a\a} F_{a\beta} - \ttil{\a\beta} }\pa{ F_{a\a}^2 - \ttil{\a\a} } = \E[(f(N_\a+b_a) f(N_\beta+b_a) - \ttil{\a\beta} )(f(N_\a+b_a)^2 - \ttil{\a\a}) ]
\end{equation*}
and note $N_\beta=N_\beta^{(\a)} + C_\beta^\a N_\a$ then Taylor expand in $C_\beta^\a N_\a$ to get
\als{
    &\E\pa{f(N_\a+b_a) f(N_\beta^{(\a)} +b_a) - \ttil{\a\beta}} \pa{f(N_\a+b_a)^2-\ttil{\a\a} } \\
    &\quad+ C_{\beta}^\a \E N_\a f'(N_{\beta}^{(\a)})  \pa{f(N_\a+b_a)^2-\ttil{\a\a} } +\calO(n_0^{2\e-1}).
}
The second term above is $\calO(n_0^{\e-1/2})$ because of the  $C_\beta^\a$ term. For the first term, note it is equal to 
\eq{
    \E f(N_\beta^{(\a)} +b_a) \cdot \E \pa{f(N_\a+b_a)  } \pa{f(N_\a+b_a)^2-\ttil{\a\a} }.
}
Finally, $\E f(N_\beta^{(\a)} +b_a)=\calO(n_0^{\e-1/2})$ since $C_\beta^\a=\calO(n_0^{\e-1/2})$ and 
\eq{
    0=\E f(N_\beta+b_a) = \E f(N_\beta^{(\a)} + b_a) + C_\beta^\a  \E N_\a f'(N_\beta^{(\a)}) +\calO(n_0^{2\e-1}).
}

So squaring and putting the above bounds together we find the summands are $\calO(n_0^{2\e-1})$, which is $o(1)$.

\paragraph{Eq.~\eqref{eq_3} is $o(n_0^2)$, since the summands are $o(1)$:} Again we write $N_\beta=N_\beta^{(\a)} + C_\beta^\a N_\a$ and Taylor expand in $C_\beta^\a N_\a$:
\als{
    &\E \pa{ f(N_\a +b_a)^2 - \ttil{\a\a} } \cdot \E \pa{ f(N_\beta^{(\a)} +b_a)^2 - \ttil{\beta\beta} } \\
    &\quad+2C_\beta^\a\E \pa{ N_\a f(N_\a +b_a)^2 - \ttil{\a\a} } \cdot \E \pa{ f(N_\beta^{(\a)} +b_a)f'(N_\beta^{(\a)} +b_a) } +\calO(n_0^{2\e-1}),
}
where the first term is zero and the second is $\calO(n_0^{\e-1/2})$ due to $C_\beta^\a$. Therefore after squaring the summands are $\calO(n_0^{2\e-1})$, which is $o(1)$.

\paragraph{Eq.~\eqref{eq_4} is $o(n_0^2)$, since the summands are $o(n_0^{-1})$:} We have 
\eq{\label{eq_12}
    \E \pa{ F_{a\a}^2-\ttil{\a\a} } \pa{ F_{a\beta}F_{a\a'}-\ttil{\beta\a'} } = \E  (f(N_\a+b_a)^2-\ttil{\a\a}) (f(N_\beta+b_a)f(N_{\a'}+b_a)-\ttil{\beta\a'})
}
and expand $N_\beta$ and $N_{\a'}$ in $N_\a$ to get
\als{
    &\E  (f(N_\a+b_a)^2-\ttil{\a\a}) \cdot\E  (f(N_\beta^{(\a')}+b_a)f(N_{\a'}^{(\beta)}+b_a)-\ttil{\beta\a'}) \\
    &\quad+ \E  N_\a (f(N_\a+b_a)^2-\ttil{\a\a})\pa{ C_\beta^\a f'(N_\beta^{(\a)}+b_a)f(N_{\a'}^{(\a)}+b_a)  + C_{\a'}^\a f'(N_{\a'}^{(\a)}+b_a)f(N_{\beta}^{(\a)}+b_a) }\\
    &\quad+ \calO(n_0^{2\e-1}) .
}
The first term above is zero. The second term can be written
\al{\nonumber
    &C_\beta^\a \E  N_\a (f(N_\a+b_a)^2-\ttil{\a\a}) \cdot \E   f'(N_\beta^{(\a)}+b_a)f(N_{\a'}^{(\a)}+b_a) \\  
    & \quad + C_{\a'}^\a \E N_\a (f(N_\a+b_a)^2-\ttil{\a\a}) \cdot \E  f'(N_{\a'}^{(\a)}+b_a)f(N_{\beta}^{(\a)}+b_a).\label{eq_11}
}
Now we can expand $N_\beta^{(\a)}$ in $N_{\a'}$ as $N_\beta^{(\a)} = N_\beta^{(\a\a')} +C_\beta^{\a'}N_{\a'}$, where $N_\beta^{(\a\a')}$ and $N_{\a'}$ are independent. Then applying Lem.~\ref{lem: cor structure}, we note
\eq{
    \E   f'(N_\beta^{(\a)}+b_a)f(N_{\a'}^{(\a)}+b_a) = \E f'(N_\beta^{(\a\a')} +b_a) \cdot \E f(N_{\a'}^{(\a)}+b_a) + \calO(n_0^{\e-1/2}).
}
Applying Lem.~\ref{lem: cor structure} again shows $\E f(N_{\a'}^{(\a)}+b_a)=\calO(n_0^{\e-1/2})$. Thus both terms in Eq.~\eqref{eq_11} are $\calO(n_0^{\e-1/2})$. Combining this with the fact that $C_\beta^\a$ and $C_{\a'}^\a$ are $\calO(n_0^{\e-1/2})$, we conclude that Eq.~\eqref{eq_12} is $\calO(n_0^{2\e-1})$. Squaring shows the summands are $\calO(n_0^{4\e-2})$, which is $o(n_0^{-1})$.

\paragraph{Eq.~\eqref{eq_5} is $o(n_0^2)$, since the summands are $o(n_0^{-1})$:} The argument is similar to that above for Eq.~\eqref{eq_4}. Except we expand as $N_\beta=N_\beta^{(\a)}+C_\beta^\a N_\a$ and  $N_{\a'}=N_{\a'}^{(\a)}+C_{\a'}^\a N_\a$.

\paragraph{Eq.~\eqref{eq_6} is $o(n_0^2)$, since the summands are $o(n_0^{-2})$:}
We expand $N_{\a'}=N_{\a'}^{(\a\beta)} + C_{\a'}^{\a}N_{\a} + C_{\a'}^{\beta}N_{\beta}$ and $N_{\beta'}=N_{\beta'}^{(\a\beta)} + C_{\beta'}^{\a}N_{\a} + C_{\beta'}^{\beta}N_{\beta}$, and then Taylor expand as before. We find
\als{
    &\E \pa{f(N_\a+b_a)f(N_\beta+b_a) - \ttil{\a\beta} } \pa{ f(N_{\a'}+b_a)f(N_{\beta'}+b_a) - \ttil{\a'\beta'} } \\
    &\quad = \E \pa{f(N_\a+b_a)f(N_\beta+b_a) - \ttil{\a\beta}} \cdot \E \pa{ f(N_{\a'}^{(\a\beta)}+b_a)f(N_{\beta'}^{(\a\beta)}+b_a) - \ttil{\a'\beta'} } \\
    &\quad+\E\Bigg{[}\pa{f(N_\a+b_a)f(N_\beta+b_a) - \ttil{\a\beta}}\Big{(}(C_{\a'}^{\a}N_{\a} + C_{\a'}^{\beta}N_{\beta})f(N_{\beta'}^{(\a\beta)} +b_a)f'(N_{\a'}^{(\a\beta)} +b_a) \\
    &\qquad\qquad\qquad+(C_{\a'}^{\a}N_{\a} + C_{\a'}^{\beta}N_{\beta})f(N_{\a'}^{(\a\beta)} +b_a)f'(N_{\beta'}^{(\a\beta)} +b_a) \Big{)}\Bigg{]}.
}
The first term above is zero. For the second term, consider
\eq{\label{eq_22}
    C_{\a'}^{\a}\E N_{\a}\pa{f(N_\a+b_a)f(N_\beta+b_a) - \ttil{\a\beta}} \cdot \E f(N_{\beta'}^{(\a\beta)} +b_a)f'(N_{\a'}^{(\a\beta)} +b_a )
}
which is one of four similar terms that come from expanding the sums out of the second term above. We need to show Eq.~\eqref{eq_22} is $\calO(n^{3\e-3/2})$, and since the other 3 terms are of the same form as Eq.~\eqref{eq_22} this will complete the argument. Taking $\E N_{\a}\pa{f(N_\a+b_a)f(N_\beta+b_a) - \ttil{\a\beta}}$ first and expanding $N_\beta = N_\beta^{(\a)}+C_\beta^{\a} N_\a$, we see
\als{
    \E N_{\a}\pa{f(N_\a\!+\!b_a)f(N_\beta\!+\!b_a) \!-\! \ttil{\a\beta}} &= \E N_{\a}f(N_\a+b_a)f(N_\beta+b_a) \\
    &= \E N_{\a}f(N_\a\!+\!b_a)\E f(N_\beta^{(\a)}\!+\!b_a) \!+\! C_\beta^{\a}\E N_{\a}^2f(N_\a\!+\!b_a)f'(N_\beta^{(\a)}) \!+\! \calO(n_0^{2\e\!-\!1}) \\
    &=\calO(n_0^{\e-1/2}),
}
since $\E f(N_\beta^{(\a)}+b_a)=\calO(n_0^{\e-1/2})$ as above. Second, we consider $\E f(N_{\beta'}^{(\a\beta)} +b_a)f'(N_{\a'}^{(\a\beta)} +b_a )$ by expanding $N_{\beta'}^{(\a\beta)} = N_{\beta'}^{(\a\beta\a')} + C_{\beta'}^{\a'}N_{\a'}$
\als{
    \E f(N_{\beta'}^{(\a\beta)} \!+\!b_a)f'(N_{\a'}^{(\a\beta)} \!+\!b_a ) & \!= \E f(N_{\beta'}^{(\a\beta\a')} \!+\!b_a) \E f'(N_{\a'}^{(\a\beta)} \!+\!b_a ) \!+\! C_{\beta'}^{\a'}\E f'(N_{\a'}^{(\a\beta)} \!+\!b_a ) N_{\a'}f' \!+\! \calO(n_0^{2\e\!-\!1}),
}
which is $\calO(n_0^{\e-1/2})$ since both $\E f(N_{\beta'}^{(\a\beta\a')} +b_a)$ and $ C_{\beta'}^{\a'}$ are $\calO(n_0^{\e-1/2})$ as before. Third, ${C_{\a'}^{\a} = \calO(n_0^{\e-1/2})}$, so combining these three multiplicative factors we have Eq.~\eqref{eq_22} is $\calO(n_0^{3\e-3/2})$.

Finally, squaring we see the summands are $\calO(n_0^{6\e-3})$, which is clearly $o(n_0^{-2})$.

\paragraph{Assumption (A4).} This is much easier to verify: using Lem.~\ref{lem: cor structure}, we see
\als{
    \tr{(\ttil{})^2} &= \sum_{\a,\beta} (\ttil{\a\beta})^2 \\
    &= \sum_{\a} (\ttil{\a\a})^2 + \sum_{\a\neq\beta} (\ttil{\a\beta})^2 \\
    &= n_0 \eta(b_a)^2 + \zeta(b_a)^2 \sum_{\a\neq \beta}c_{\a\beta}^2 + \calO(n_0^{3\e+1/2}) \\
    & = \calO(n_0^{2\e+1})\\
    &=o(n_0^2)
}
without any dependence on $a$.

\section{Derivation of self-consistent equation for \texorpdfstring{$\Flin$}{}}\label{sec: sce derivation}

\begin{theorem}
    Let $m_{m}(z)$ be the Stieltjes transform of $\frac{1}{n_1}\Flin^\top \Flin$, \emph{i.e.} $\frac{1}{m}\tr G(z)$. Then with probability 1, as $n_1\to\infty$, for all $z$ such that $\Im z>0$, $m_{n_1}(z)\to s(z)$, where $s(z)$ is the solution of the coupled equations
    \eq{
    s(z) = \E_{S \sim MP(\phi)}\qa{
        \frac{1}{C_0(z) + S C_1(z)}
    } \quad\text{and}\quad 
    \tilde{s}(z) = \E_{S \sim MP(\phi)}\qa{
        \frac{S}{C_0(z) + S C_1(z)}
    }
}
with
\begin{equation}
    C_0(z) := -z + \E_{B\sim \mu_B} \qa{ \frac{\eta(B) - \zeta(B)}{D(B)} },\quad
   C_1(z) := \E_{B\sim \mu_B} \qa{ \frac{\zeta(B)}{D(B)}}\,,
\end{equation}
\begin{equation}
 D(B) := 1+\frac{\psi}{\phi} \pa{ \zeta(B)\tilde{s}(z) +\pa{\eta(B)-\zeta(B)}s(z) } \,.
\end{equation}
\end{theorem}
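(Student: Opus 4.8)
The plan is to recognize $\tfrac1{n_1}\Flin^\top\Flin$ as a generalized sample covariance matrix and run the resolvent method; the computation is clean here because the per-sample population covariances all share the eigenbasis of $X^\top X$.

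\emph{Step 1: reduction.} Let $r_a\in\R^m$ denote the $a$-th row of $\Flin$. Because $\Theta^1,\Theta^2$ have entries that are independent across rows, the $r_a$ are independent Gaussians, and from the definition of $\Flin$ one reads off $\Flin^\top\Flin=\sum_{a=1}^{n_1}r_ar_a^\top$ with $\mathrm{Cov}(r_a)\propto T_a$, where
\[
  T_a \;:=\; \zeta(b_a)\,K \;+\; \bigl(\eta(b_a)-\zeta(b_a)\bigr)\,I_m,\qquad K:=\tfrac1{n_0}X^\top X .
\]
Thus $\tfrac1{n_1}\Flin^\top\Flin=\tfrac1{n_1}\sum_a r_a r_a^\top$ is a sum of independent rank-one terms whose population covariances $T_a$ are polynomials in $K$ and hence mutually commute.

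\emph{Step 2: deterministic equivalent.} Starting from $\bigl(\tfrac1{n_1}\sum_a r_ar_a^\top-zI\bigr)G(z)=I$ and using the Sherman--Morrison identity to remove one sample at a time, $r_a^\top G\,r_a = r_a^\top G^{(a)}r_a/\bigl(1+\tfrac1{n_1}r_a^\top G^{(a)}r_a\bigr)$ with $G^{(a)}$ the resolvent after deleting the $a$-th term, Gaussian concentration of quadratic forms gives $\tfrac1{n_1}r_a^\top G^{(a)}r_a = \tfrac1{n_1}\tr\bigl(T_aG^{(a)}\bigr)+o(1)=\delta_a(z)+o(1)$ with $\delta_a(z):=\tfrac1{n_1}\tr\bigl(T_aG(z)\bigr)$, after using rank-one stability of the resolvent to replace $G^{(a)}$ by $G$. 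Resumming yields the deterministic equivalent
\[
  G(z)\;\approx\;\bar G(z):=\Bigl(-zI_m+\tfrac1{n_1}\sum_{a=1}^{n_1}\tfrac{T_a}{1+\delta_a(z)}\Bigr)^{-1},\qquad \delta_a(z)=\tfrac1{n_1}\tr\bigl(T_a\bar G(z)\bigr),
\]
in the sense that $\tfrac1m\tr\bigl(M(G(z)-\bar G(z))\bigr)\to0$ almost surely for bounded deterministic $M$. This is the standard deterministic equivalent for sample covariance matrices with a covariance profile; I would either invoke it (in the spirit of the references already cited) or reproduce the short argument, the rigorous ingredients being the bounds $\|G\|,\|G^{(a)}\|\le(\Im z)^{-1}$, the fact that $\Im\delta_a(z)>0$ for $\Im z>0$ (since $T_a\succeq0$ and $\Im\bar G(z)\succ0$), which keeps $|1+\delta_a(z)|$ bounded below, and uniform control of the quadratic-form fluctuations over all $n_1$ samples (plus Borel--Cantelli for the almost-sure statement).

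\emph{Step 3: solving the fixed point.} Since $T_a=\zeta(b_a)K+(\eta(b_a)-\zeta(b_a))I$, $\delta_a(z)$ depends on $\bar G$ only through $s_m:=\tfrac1m\tr\bar G$ and $\tilde s_m:=\tfrac1m\tr(K\bar G)$: using $m/n_1\to\psi/\phi$,
\[
  \delta_a(z)=\tfrac{\psi}{\phi}\bigl(\zeta(b_a)\tilde s_m+(\eta(b_a)-\zeta(b_a))s_m\bigr)+o(1)=D(b_a)-1+o(1).
\]
Hence $\tfrac1{n_1}\sum_a\tfrac{T_a}{1+\delta_a}=K\cdot\tfrac1{n_1}\sum_a\tfrac{\zeta(b_a)}{D(b_a)}+I\cdot\tfrac1{n_1}\sum_a\tfrac{\eta(b_a)-\zeta(b_a)}{D(b_a)}\to C_1(z)K+(C_0(z)+z)I$ by Assumption 3 ($\tfrac1{n_1}\sum_a\delta_{b_a}\to\mu_B$), so $\bar G(z)=\bigl(C_0(z)I+C_1(z)K\bigr)^{-1}$. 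Taking $\tfrac1m\tr$ and $\tfrac1m\tr(K\,\cdot\,)$ and using that the spectrum of $K$ converges (to $MP(\phi)$ for iid Gaussian data, or to $\mu_X$ in general) identifies $s(z)=\E_{S}\bigl[(C_0+SC_1)^{-1}\bigr]$ and $\tilde s(z)=\E_{S}\bigl[S(C_0+SC_1)^{-1}\bigr]$ with $S$ drawn from that limit, i.e. exactly the claimed system; uniqueness of the $\mathbb{C}^+\to\mathbb{C}^+$ solution is the usual Stieltjes-transform argument (the fixed-point map is a strict contraction off the real axis).

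\emph{Main obstacle.} The crux is Step 2: carrying out the deterministic-equivalent argument with the nontrivial per-sample covariance profile $\{T_a\}$ --- uniform concentration of $\tfrac1{n_1}r_a^\top G^{(a)}r_a$ across the $n_1$ samples, the a-priori lower bound on $|1+\delta_a(z)|$, and well-posedness/stability of the fixed-point system so that both $G\approx\bar G$ and the correct analytic branch are pinned down. A secondary point is the passage $\tfrac1{n_1}\sum_a(\cdot)\to\E_{\mu_B}(\cdot)$, which needs a uniform-integrability argument if $\eta,\zeta$ are unbounded on $\mathrm{supp}(\mu_B)$.
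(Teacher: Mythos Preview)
Your proposal is correct and is essentially the same approach as the paper's: both run the resolvent/leave-one-out method on the Gaussian model, use concentration of quadratic forms together with rank-one (interlacing) stability to replace minors by the full resolvent, and exploit that the per-row population covariances $T_a=\zeta(b_a)K+(\eta(b_a)-\zeta(b_a))I$ are polynomials in $K=X^\top X/n_0$ to collapse the fixed point to the coupled $(s,\tilde s)$ equations. The only cosmetic difference is packaging---the paper works via the block matrix $H=\bigl(\begin{smallmatrix}-zI&\Flin^\top/\sqrt{n_1}\\\Flin/\sqrt{n_1}&-I\end{smallmatrix}\bigr)$ and Schur complements after first diagonalizing $\Sigma$, whereas you apply Sherman--Morrison directly and use the commutativity of the $T_a$ in place of diagonalization---but the underlying mechanism and the resulting self-consistent equations are identical.
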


We want to study the eigenvalues of $\frac{1}{n_1}\Flin^\top \Flin$. Note without loss of generality it is sufficient to consider diagonal $\Sigma$, since we can diagonalize $\Sigma$ using some orthogonal matrices $O$ and $O'$ as it is the square root of a positive definite matrix. Moreover, these orthogonal matrices, when applied to either $\Theta^1$ or $\Theta^2$ do not change their distributions. Thus, diagonalizing $\Sigma$, so that $\Sigma_{\a\a}=\sqrt{\lambda_{\a}^X}$, where $\lambda_\a^X$ are the eigenvalues of $X^\top X/n_0$, results in an equivalent matrix ensemble in distribution (see \cite{silverstein1995empirical} for more detail). With this simplification, $\Flin$ has independent entries given by
\eq{
    F^{\text{lin}}_{a\a} = \mathcal{C}_{a}\Theta^1_{a\a} \sqrt{\lambda_\a^X}+\sqrt{\mathcal{V}_{a}^2-\mathcal{C}_{a}^2}\Theta^2_{a\a}.
}

To make the derivation easier, we can partly linearize the problem by studying the matrix
\eq{
    H := \begin{bmatrix}
    -z I        &    \Flin^\top / \sqrt{n_1} \\
    \Flin / \sqrt{n_1}  &   -I 
\end{bmatrix}.
}

By the Schur complement formula, one easily finds
\begin{align}\label{eq: sce 1}
    s_m(z) &:= \frac{1}{m} Tr(\Flin^\top \Flin/n_1  - zI) = \frac{1}{m}\sum_{\a=1}^{m} G_{\a\a}(z) \\
    \label{eq: sce 2}\text{and} \quad z \tilde{s}_m(z) &:= \frac{1}{n_1} Tr(\Flin\Flin^\top /n_1 - zI) = \frac{1}{n_1}\sum_{a=n_1+1}^{n_1+m} G_{a a}(z),
\end{align}
where $G$ is the \emph{inverse} of $H$. Again by the Schur complement formula
\begin{align}\label{eq: sce 3}
    \frac{1}{G_{\a\a}} & = -z -  \sum_{a,b=1}^{n_1} F^{\text{lin}}_{a \a}F^{\text{lin}}_{b \a} G^{(\a)}_{m+a, m+b} \\
    \label{eq: sce 4}\text{and}\quad \frac{1}{G_{aa}} & =-1 -  \sum_{\a,\beta=1}^{m} F^{\text{lin}}_{a\a}F^{\text{lin}}_{a\beta} G^{(a)}_{\alpha\beta}
\end{align}
for $\alpha\in\{1,\ldots, m\}$ and $a\in\{m+1,\ldots, m+n_1 \}$ and $G^{(a)}$ is the inverse of the minor $H^{(a)}$. Since $G^{(\a)}$ is independent of $\Theta^1_{1\a},\ldots,\Theta^1_{n1\a}$ and $\Theta^2_{1\a},\ldots,\Theta^2_{n1\a}$, we see by taking the expectation over these variables that
\eq{
    \E \sum_{a,b=1}^{n_1} F^{\text{lin}}_{a \a}F^{\text{lin}}_{b \a} G^{(\a)}_{m+a, m+b} = \frac{1}{n_1} \sum_{a=1}^{n_1} \pa{\lambda_{\a}^X \zeta(b_a) + \eta(b_a)- \zeta(b_a) } G^{(\a)}_{m+a,m+a}. 
}
Moreover, standard concentration inequalities and the Ward identity (see \cite{erdos2017dynamical}) show 
\al{
    &\absa{ \sum_{a,b=1}^{n_1} F^{\text{lin}}_{a \a}F^{\text{lin}}_{b \a} G^{(\a)}_{m+a, m+b} - \E \sum_{a,b=1}^{n_1} F^{\text{lin}}_{a \a}F^{\text{lin}}_{b \a} G^{(\a)}_{m+a, m+b} }  \\
    & \leq C \pa{ \frac{1}{m^2}\sum_{a,b}\absa{G^{(\a)}_{m+a, m+b}}^2 }^{1/2} \leq C \pa{ \frac{1}{m^2}\sum_{a}\frac{\Im G^{(\a)}_{m+a, m+a}}{\Im z} }^{1/2} \leq \calO(1/\sqrt{m})
}
with high probability. Similar bounds are easily obtained for $\sum_{\a,\beta=1}^{m} F^{\text{lin}}_{a\a}F^{\text{lin}}_{a\beta} G^{(a)}_{\alpha\beta}$.

We may also replace $G^{(\a)}$ with $G$ at the expense of another small error that can be bounded using the Cauchy interlacing theorem: $\absa{G^{(\a)}_{m+a,m+a}-G_{m+a,m+a}}\leq \calO(1/m)$. Using this control over these sums, we see
\al{\label{eq: sce 5}
	\frac{1}{G_{\a\a}} &=-z- \sum_{a,b=1}^{n_1}F^{\text{lin}}_{a \a}F^{\text{lin}}_{b \a} G^{(\a)}_{ab} \\
	&= -z -\frac{1}{n_1} \sum_{a=1}^{n_1} \pa{\lambda_{\a}^X \zeta(b_a) + \eta(b_a)- \zeta(b_a) } G_{m+a,m+a} + \calO(1/\sqrt{m})
 }
and
\al{\label{eq: sce 6}
	\frac{1}{G_{aa}} &=-1- \sum_{\a,\beta=1}^{m}F^{\text{lin}}_{a \a}F^{\text{lin}}_{a\beta} G^{(a)}_{\a\beta} \\
	&= -1 -\frac{1}{n_1} \sum_{\a=1}^{m} \pa{\lambda_{\a}^X \zeta(b_a) + \eta(b_a)-\zeta(b_a) } G_{\a\a} + \calO(1/\sqrt{m}). 
}

Finally, we invert Eq.~\eqref{eq: sce 6}, multiply by ${\lambda_{\a}^X \zeta(b_a) + \eta(b_a) -\zeta(b_a)}$, and average over $a$ to find
\al{
	&\frac{1}{n_1} \sum_{a=1}^{n_1} \pa{\lambda_{\a}^X \zeta(b_a) + \eta(b_a) -\zeta(b_a)} G^{(\a)}_{m+a,m+a} \\
    &=  - \frac{1}{n_1} \sum_{a=1}^{n_1} \frac{\lambda_{\a}^X \zeta(b_a) + \eta(b_a)-\zeta(b_a) }{1 +\frac{\psi}{\phi}\pa{ \zeta(b_a) \tilde{s}(z) +  \pa{ \eta(b_a) -\zeta(b_a)}s_m(z) }+\calO(1/\sqrt{m})} \\
	&=  - \frac{1}{n_1} \sum_{a=1}^{n_1} \frac{\lambda_{\a}^X \zeta (b_a) + \eta(b_a) -\zeta(b_a)}{1 +\frac{\psi}{\phi}\pa{ \zeta(b_a) \tilde{s}(z) +  (\eta(b_a)-\zeta(b_a) s_m(z) }} +\calO(1/\sqrt{m})\\
	&= -\E_{B\sim \mathcal{N}(0,\sigma_b^2) }\qa{ \frac{\lambda_{\a}^X \zeta (B) + \eta(B) -\zeta(B)}{1 +\frac{\psi}{\phi}\pa{ \zeta(B) \tilde{s}(z) +  (\eta(B)-\zeta(B)) s_m(z) }}}+o(1),
}
where $\tilde{s}_m(z)=\frac{1}{m}\sum_\a \lambda_{\a}^X G_{\a\a}$, where we Taylor expanded in the second step, and where we used our assumption on $B$.

We can now invert Eq.~\eqref{eq: sce 5} and average over $\a$ to find
\eq{\label{eq: sce m}
	s_m(z) \to \E_{S\sim\mu_X}\qa{ \frac{1}{-z +\E_{B\sim \mathcal{N}(0,\sigma_b^2) }\qa{ \frac{S \zeta (B) + \eta(B)-\zeta(B) }{1 +\frac{\psi}{\phi}\pa{ \zeta(B) m_\sigma(z) +  (\eta(B) -\zeta(B))s(z) }}}}}.
}
Similarly,
\eq{\label{eq: sce mtilde}
	\tilde{s}_m(z)  \to \E_{S\sim\mu_X}\qa{ \frac{S}{-z +\E_{B\sim \mathcal{N}(0,\sigma_b^2) }\qa{ \frac{S \zeta (B) + \eta(B)-\zeta(B) }{1 +\frac{\psi}{\phi}\pa{ \zeta(B) \tilde{s}(z) +  (\eta(B)-\zeta(B) s(z) }}}}}.
}

Note that the integral over $S$ here is an integral over the limiting distribution of the data $\mu_X$, which we assume
\eq{
    \frac{1}{m}\sum_{\a} \delta_{\lambda_\a^X} \to \mu_X
}
in distribution. In the case of iid Gaussian data, this is exactly given by the Marchenko-Pastur distribution.

\section{Nonlinear mixtures can increase model capacity over single nonlinearities}
\label{sec_training_mixtures_better}
In this section, we build on our results for the training loss on noisy autoencoder tasks to examine the benefits of utilizing nonlinear mixtures. For this analysis, we consider the simplest possible nontrivial distribution over activation functions: a Bernoulli mixture of two different functions. To each of these functions we associate two constants, $\eta$ and $\zeta$, which derive from Eq.~\eqref{eqn:eta_zeta} but have no $B$-dependence since each function is a \emph{single nonlinearity}. Concretely, let
\begin{equation}
    \eta = \E\qa{f(N)^2} \quad \text{and} \quad
    \zeta = \E\qa{\sigma_Z f'(N)}^2,
\end{equation}
where $N \sim \mathcal{N}(0, \sigma_Z)$. For the two functions themselves, we utilize (i) a "pure linear" activation function with $\eta=1$, \emph{i.e.} the identity function and (ii) a "pure nonlinear"~\citep{hastie2019surprises} activation function with $\eta = 1$ and $\zeta = 0$. (The particular purely nonlinear function in (ii) is irrelevant, as our theory predicts and our experiments confirm; see Fig.~\ref{fig:mixture_sm}(a)). To be precise, we define for $p\sim \text{Bernoulli(p)}$, 
\eq{
    f_p(x) := \begin{cases}
        x & \text{if } p = 0\\
        g_{\zeta = 0}(x) & \text{if } p = 1 \end{cases} ,
}
where $g_{\zeta = 0}$ is any function with $\eta = 1$ and $\zeta = 0$ (see, \emph{e.g.}, the functions in Fig.~(3) of~\citep{pennington2017nonlinear}).

The task of computing $E_{\text{train}}$ for $f_p$ is cumbersome but purely algebraic. To see how to proceed, notice that the expectations in Eq.~\eqref{eq: main sce aux} are simple for $f_p$:
\eq{
    C_0(z) = -z + \frac{p}{1+\psi/\phi\,s(z)} \quad\text{and}\quad
    C_1(z) = \frac{1-p}{1+\psi/\phi\,\tilde{s}(z)}\,.
}
Plugging these equations into Eqs.~\eqref{eq:m_reduced} and \eqref{eq:mtilde_reduced}, collecting terms and simplifying yields a set of coupled polynomial equations for $s(z)$ and $\tilde{s}(z)$. Taking the total derivative of these equations with respect to z yields two additional equations which can be solved to express $s'(z)$ and $\tilde{s}'(z)$ in terms of $s(z)$ and $\tilde{s}(z)$. Combining these results produces a polynomial system whose solution\footnote{Special care must be taken in selecting the correct root of this equation, in accordance with the condition that $s(z)\sim-1/z$ for large $|z|$.} encodes $E_\text{train}$ through Eq.~\eqref{eq: training error}.
Fig.~\ref{fig:mixture_sm}(a) shows the result of this calculation in solid lines for various values of $\gamma$, while the $1\sigma$ error bars show empirical simulations with finite networks. The red stars in the figure show that for many values of $\gamma$, the optimal mixture percentage is intermediate, \emph{i.e.} $0<p<1$.

\begin{figure*}[t]
    \centering
    \includegraphics[width=\linewidth]{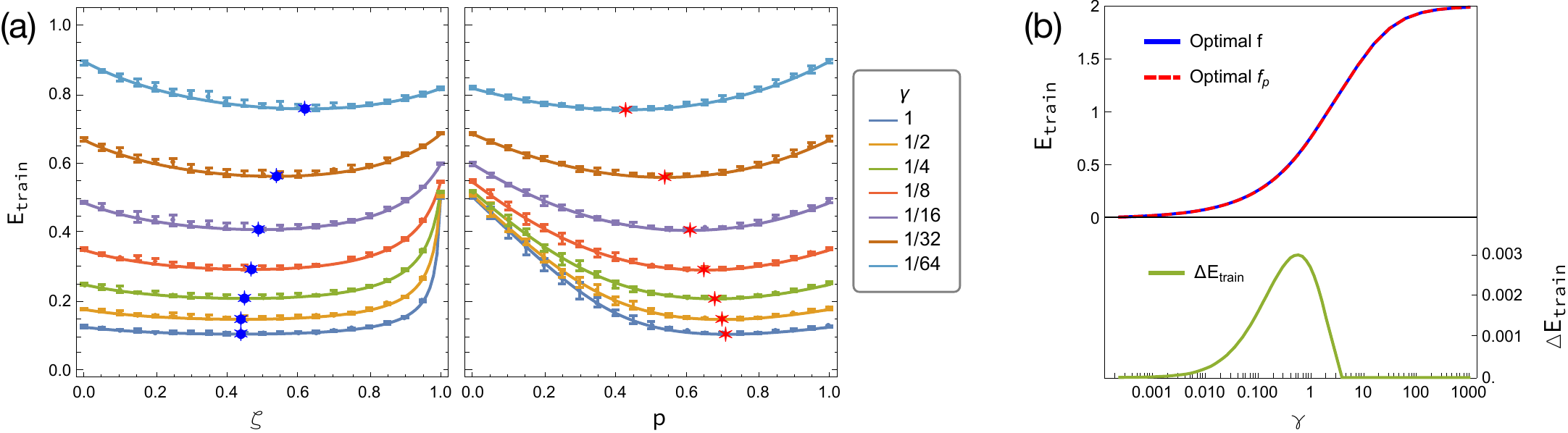}
    \caption{Performance on ridge-regularized noisy autoencoder with $\sigma_\epsilon = 1$, $\phi=1/2$, and $\psi = 1/2$. (a) Theoretical predictions for training error (solid lines) and $1\sigma$ error bars for empirical simulations of finite networks ($n_0=192$, $n_1=384$, $m=384$) for various values of ridge regularization constant $\gamma$ as the activation function varies. In the left panel, a single activation function $f$ is used. In the right panel, the non-linearity is $f_p$, a $\text{Bernoulli}(p)$-mixture of a purely linear ($\zeta=1$) and purely non-linear ($\zeta=0$) function. Each simulation uses a randomly-chosen non-linearity having the specified values of $\zeta$, demonstrating that $E_{\text{train}}$ depends on the non-linearity solely through this constant. Red and blue stars denote minima. (b) Training error as a function of $\gamma$ for the optimal $f$ and $f_p$, as determined in (a). The bottom panel shows the difference in training error, demonstrating that the optimal $\text{Bernoulli}(p)$-mixture of non-linearities has smaller training error than the best single non-linearity.}
    \label{fig:mixture_sm}
\end{figure*}

The question is, does a non-trivial mixture actually outperform a single nonlinearity? First, we must understand the performance of the optimal single nonlinearity. We note that owing to the homogeneity of the the training loss in $\eta$, $\zeta$, and $\gamma$, we can assume without loss of generality that $\eta = 1$. Therefore the entire effect of the nonlinearity should be encoded in the single constant $\zeta$. In Fig.~\ref{fig:mixture_sm}(a), we plot our theoretical prediction for $E_\text{train}$ in solid lines and empirical simulations for finite networks as $1\sigma$ error bars. The activation function used for each simulation is chosen randomly, conditional on the value of $\zeta$. So the good agreement in the left panel of Fig.~\ref{fig:mixture_sm}(a) demonstrates not just the correctness of our theoretical result but also the fact that $E_\text{train}$ depends on the activation function solely through the constant $\zeta$. The blue stars in this figure indicate that the optimal single nonlinearity is neither purely linear ($\zeta = 1$), nor purely nonlinear ($\zeta = 0$), but rather something in between.

For this particular problem setup, the performance of the optimal single nonlinearity and the optimal Bernoulli mixture are rather close, as indicated by the top panel of Fig.~\ref{fig:mixture_sm}(b). However, owing to our precise analytical formulation, we can evaluate the training loss to high precision and observe that there is indeed a difference in performance between the two models, as shown in the bottom panel of Fig.~\ref{fig:mixture_sm}(b). This result establishes that there are some problems for which even the best single nonlinearity is outperformed by a mixture of nonlinearities.

\section{Derivation of the test error}
\label{sec_test_error_sm}

The linearization from Sec.~\ref{subsec: Flin} is a key ingredient in the calculation of the test error. In the bias free case, \cite{adlam2020neural} shows how it can be used with operator-valued free probability to provide an asymptotically exact prediction for the test error under the data generating assumption of Sec.~\ref{sec_test_error}. We briefly review this method here, highlighting the differences.

We use a slightly different linearization here than Sec.~\ref{subsec: Flin} (which was more convenient for the derivation in Sec.~\ref{sec: sce derivation}), but their correlation structure is easily verified to be identical. We set
\eq{
	\Flin := \mathcal{C} W X + \pa{\mathcal{V}^2-\mathcal{C}^2}^{1/2} \Theta^2.
}

The test error is defined as 
\eq{
    E_\text{test} = \E_{\beta,\epsilon} \E_\mathbf{x} \pa{ y(\bfx) - W_2^* f(W \mathbf{x} ; B) }^2.
}
As in Sec.~4 of \cite{adlam2020neural}, the test error can be expressed as the sum of terms $E_1$, $E_2$, and $E_3$, where
\al{
    E_1 &= \mathbb{E}_{\beta,\bfx,\varepsilon}\tr(y(\bfx)y(\bfx)^\top), \\
    E_2 &= -2\mathbb{E}_{\beta,\bfx,\varepsilon}\tr(K_\bfx^\top K^{-1}Y^\top y(\bfx)), \\
    E_3 &= \mathbb{E}_{\beta,\bfx,\varepsilon}\tr(K_\bfx^\top K^{-1}Y^\top Y K^{-1} K_\bfx),
}
$K=F^\top F +\gamma I$, $K_\bfx = F^\top f(W \mathbf{x} ; B)$, $Y=\beta^\top X+\e$, and $y(\bfx) = \beta^\top\bfx$. Using these equalities the expectation over $\bfx$, $\beta$, and $\epsilon$ can be calculated to find \begin{align}
    E_1 & = 1\\
    E_2 &= E_{21}\label{eqn:E2}\\
    E_3 &= E_{31} + E_{32} \label{eqn:E3}\,,
\end{align}
where,
\begin{align}
    E_{21} & = -2 \frac{1}{n_0^{3/2}n_1} \E \tr \left(X^\top   W^\top \mathcal{C} FK^{-1}\right)\label{eq_E2}\\
    E_{31} &= \sigma_{\varepsilon}^2 \E\tr\left(K^{-1} \Sigma_3 K^{-1}\right)\label{eq_E31}\\
    E_{32} &= \frac{1}{n_0} \E \tr\left(K^{-1} \Sigma_3 K^{-1} X^\top X\right)\label{eq_E32}
\end{align}
and,
\eq{
    \Sigma_3 = \frac{1}{n_0 n_1^2}F^\top \mathcal{C}W  W^\top\mathcal{C} F + \frac{1}{n_1^2} F^\top (\mathcal{V}^2-\mathcal{C}^2)F\,.
}

We leave out the details because the calculations proceed in exactly the same fashion as in Sec.~4 of \cite{adlam2020neural}, and the end result is in fact the same as Eq.~(27) of \cite{adlam2020neural}: asymptotically $E_\text{test}$ is equal to the generalized cross-validation (GCV) metric of~\cite{golub1979generalized}. We remark that this correspondence to GCV strongly suggests a deeper underlying connection for random feature methods, and indeed similar results on the asymptotic correctness of the GCV error have been found in \cite{hastie2019surprises,jacot2020kernel}. We leave the investigation of this general connection
to future work.

\section{Additional figures}

\begin{figure}[ht]
    \centering
    \begin{tabular}{cc}
    \includegraphics[width=0.35\linewidth]{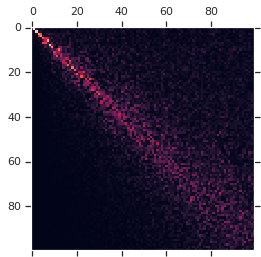} & 
    \includegraphics[width=0.35\linewidth]{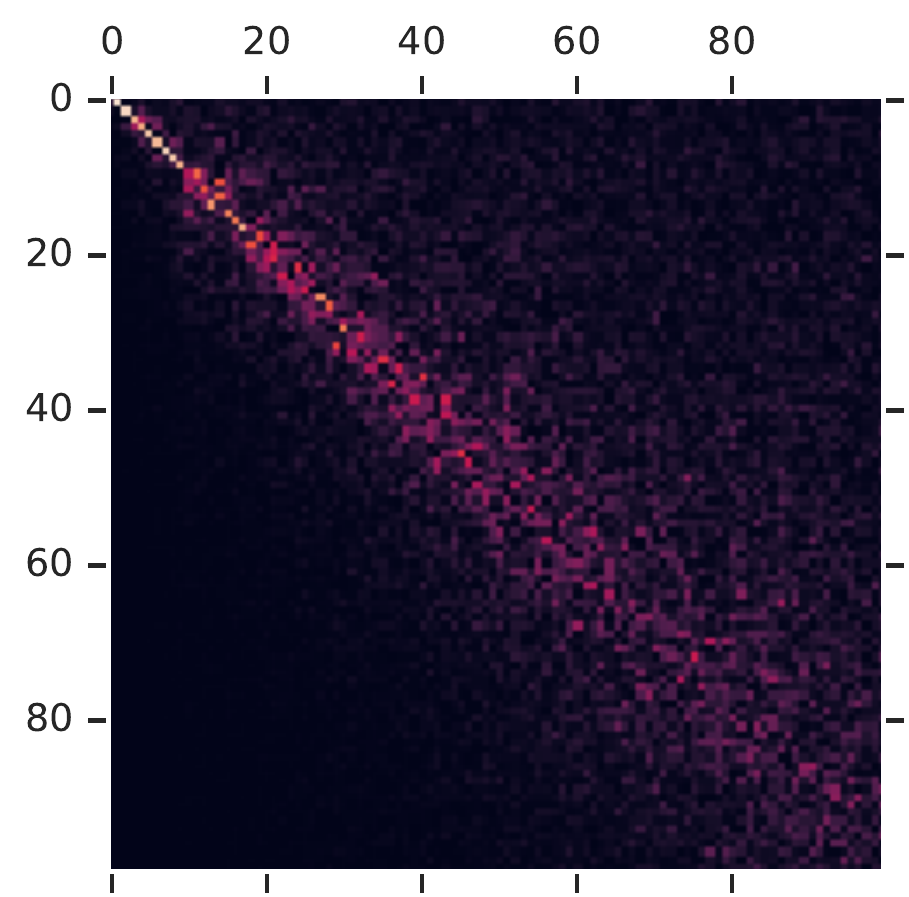}
    \end{tabular}
    
    \caption{Comparison of right singular vectors of $X$ and $F$ for MNIST (left) and CIFAR10 (right). Entry $ij$ shows $\mathbf{x}_i \cdot \mathbf{f}_j$. Although our theoretical results do not give predictions for how the singular vectors change, we found interesting behavior, with very little change to the largest singular vectors (which are nearly isolated in the spectrum), but more mixing of singular vectors in the dense part of the distribution.}
    \label{fig:mnist_singular_vectors}
\end{figure}

\begin{figure}[ht]
    \centering
    \includegraphics[width=\linewidth]{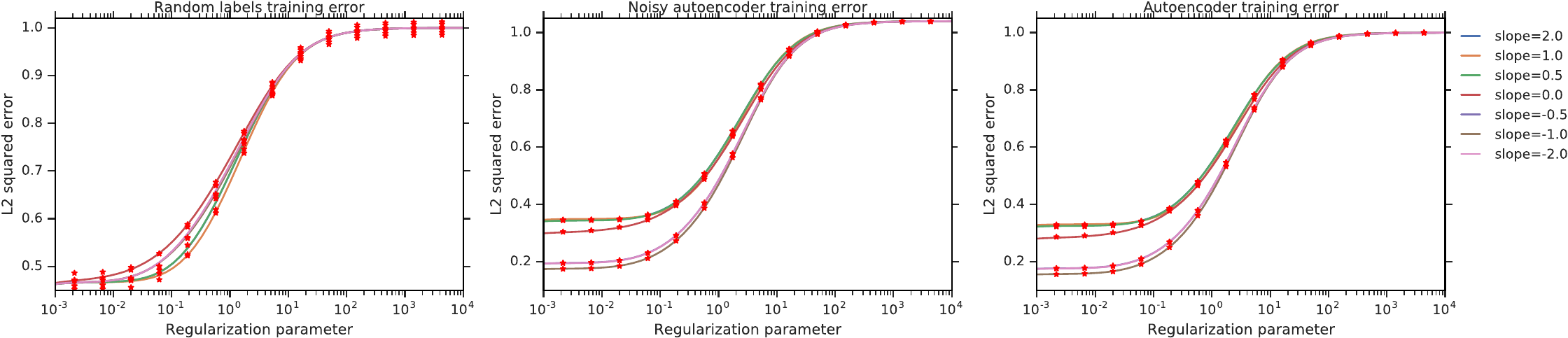}
    \caption{Comparisons of simulated ridge regression error and our theoretical prediction. Here we vary the activation function by changing the slope $\alpha$ in leaky ReLU. In particular $\alpha=-1$ is a linear function, $\alpha=0$ is regular ReLU, and $\alpha=1$ is the a scaled absolute value function. We normalize all functions so that $\mathbb{E}_b[\eta(b)] = 1$. We get excellent agreement with theory from only a single sample.}
    \label{fig:slope_ridge_reg}
\end{figure}

\begin{figure}[ht]
    \centering
    \includegraphics[width=0.5\linewidth]{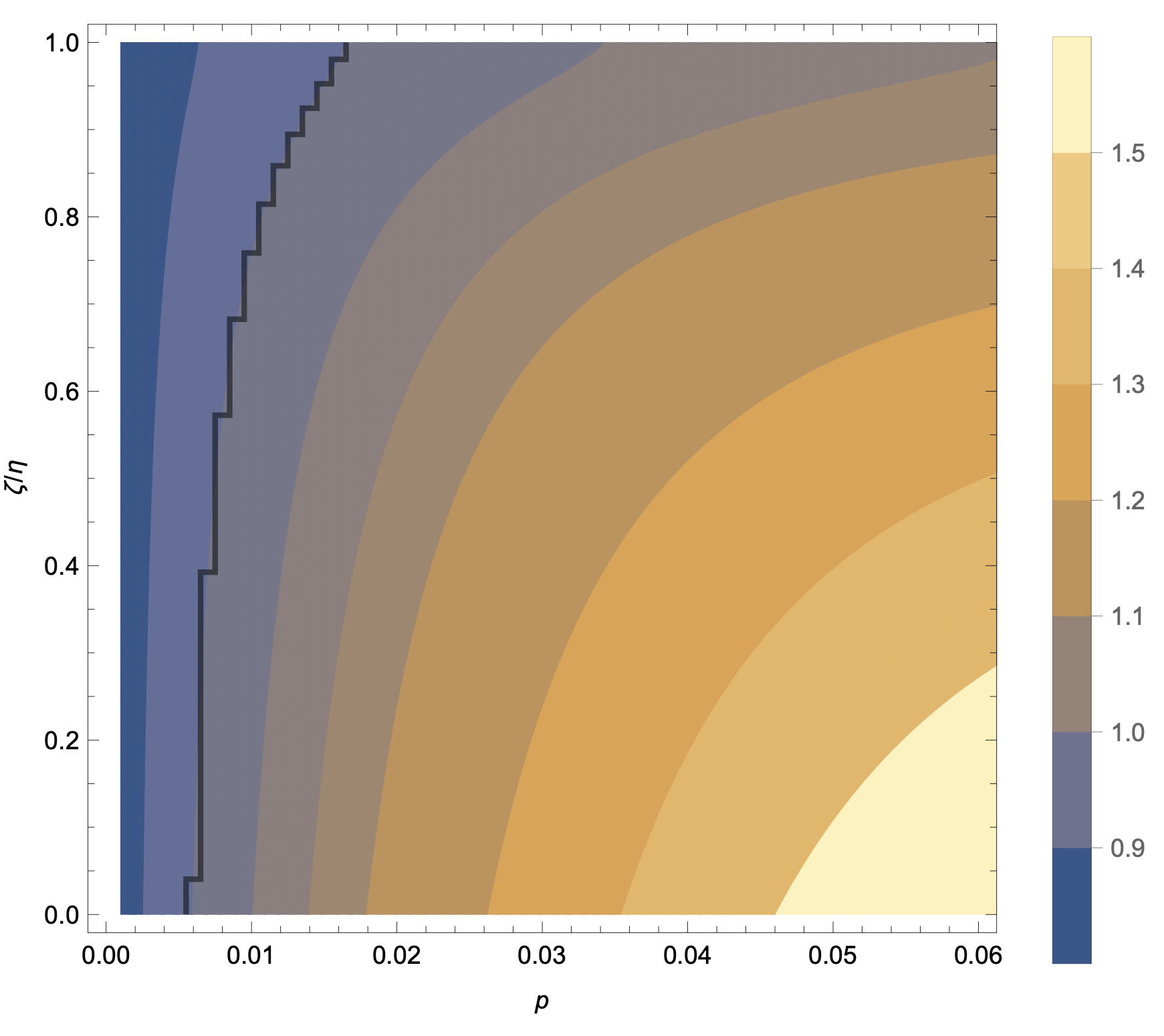}
    \caption{A close up of the left-hand side of Fig.~\ref{fig_mixtures} \textbf{Right} where $p$ is small.}
    \label{fig_mixture_zoom}
\end{figure}

\end{document}